\title{Differentiation Through Black-Box Quadratic Programming Solvers}
\author{Connor W. Magoon$^{*1}$, Fengyu Yang$^{*1}$, Noam Aigerman$^{2}$, Shahar Z. Kovalsky$^{1}$ \\
$^{1}$ Department of Mathematics, University of North Carolina at Chapel Hill, \\$^{2}$ Université de Montréal and Mila\\
$^*$ Equal contribution \\
}
\theoremstyle{plain}
\newtheorem{theorem}{Theorem}[section]
\theoremstyle{definition}
\theoremstyle{remark}
\renewcommand{\paragraph}{\textbf}
\DeclareMathOperator*{\argmin}{arg\,min}
\newcommand{\eg}{{\it e.g.}}
\newcommand{\ie}{{\it i.e.}}
\newcommand{\st}{{\textrm{s.t.}}}
\newcommand{\norm}[1]{\left\Vert#1\right\Vert}
\newcommand{\tr}[1]{\mathrm{tr} #1}
\newcommand{\set}[1]{\left\{#1\right\}}
\newcommand{\parr}[1]{\left (#1\right )}
\newcommand{\Real}{\mathbb R}
\newcommand{\diff}{\partial}
\newcommand{\diffth}{\partial_\theta}
\renewcommand{\gets}{=}
\begin{document}

\maketitle

\begin{abstract}

Differentiable optimization has attracted significant research interest, particularly for quadratic programming (QP). Existing approaches for differentiating the solution of a QP with respect to its defining parameters often rely on specific integrated solvers. This integration limits their applicability, including their use in neural network architectures and bi-level optimization tasks, restricting users to a narrow selection of solver choices. 
To address this limitation, we introduce \textbf{dQP}, a modular and solver-agnostic framework for plug-and-play differentiation of virtually any QP solver. A key insight we leverage to achieve modularity is that, once the active set of inequality constraints is known, both the solution and its derivative can be expressed using simplified linear systems that share the same matrix. This formulation fully decouples the computation of the QP solution from its differentiation. Building on this result, we provide a minimal-overhead, open-source implementation (\url{https://github.com/cwmagoon/dQP}) that seamlessly integrates with over 15 state-of-the-art solvers. 
Comprehensive benchmark experiments demonstrate dQP’s robustness and scalability, particularly highlighting its advantages in large-scale sparse problems.

\end{abstract}

\section{Introduction}
\label{sect:introduction}

Computational methods often rely on solving optimization problems, \ie, finding an optimum of an objective function subject to constraints. This has led to the development of numerous high-performance open-source and commercial solvers, particularly for constrained convex optimization \citep{wright2006numerical,boyd2004convex}. Recently, there has been growing interest in \emph{differentiable optimization}, which enables gradient-based learning through optimization layers that show promise across applications such as image classification \citep{Amos2017ICNNArgminDiff}, optimal transport \citep{rezende2021implicitriemannianconcavepotential, Richter-Powell2021ICGN}, zero-sum games \citep{ling2018game}, tessellation \citep{chen2022semidiscrete}, control \citep{Amos2018MPC,Belbute-Peres2018DiffPHysforControl,Ding2024ControlHardConstraintWithGRG}, decision-making \citep{Tan2020OptimalDecisionLP}, robotics \citep{Holmes2024SDPCertifiableGlobalDerivative}, biology \citep{zhang2023differentiable}, and NLP \citep{thayaparan2022diff}.

\emph{Differentiable optimization} focuses on computing the derivatives of the optimal solution to an optimization problem with respect to parameters defining the problem's objective and constraints. Rooted in classical sensitivity analysis and parametric programming \cite{Rockafellar1970ConvexAnalysis, Rockefellar1998VariationalAnalysis, fiacco1983introduction, fiacco1990sensitivity, fiacco1968nonlinear, Lee2010QPBook, bonnans2013perturbation, pistikopoulos2020multi}, differentiable optimization has gained momentum through the idea that optimization problems encoding prior domain knowledge can be embedded as parameterized ``layers'' within neural networks -- training such layers requires computing the derivative (gradient) of the optimization problem's solution during backpropagation \citep{amos2017optnet,cvxpylayers2019,blondel2024elements}.

This paper focuses on differentiating solutions to Quadratic Programs (QPs)—a core class of convex problems involving the minimization of a quadratic objective subject to linear inequality constraints.

Despite significant research efforts, existing approaches for differentiating QPs fail to fully leverage state-of-the-art solvers. General-purpose methods that support a broad class of optimization problems often include QPs, but they sacrifice efficiency and robustness for generality, underperforming compared to QP-specific methods. On the other hand, recent QP-specific approaches are tightly coupled with particular or proprietary solvers (\eg, for batching small QPs), limiting their flexibility \citep{amos2017optnet,bambade2024qplayer,butler2023scqpth}. 

This tight integration restricts broader applicability. Solving QPs efficiently and reliably often requires advanced solvers like Gurobi \citep{gurobi} and MOSEK \citep{andersen2000mosek}, developed over years of academic and commercial effort. These solvers handle scales and complexities that non-industrial implementations cannot. However, no single solver is optimal for all problems, making solver selection critical. To our knowledge, no existing approach provides a robust, efficient, and fully flexible framework and implementation for differentiating QPs across solvers.

We introduce \textbf{dQP}, a modular, solver-agnostic framework that transforms any QP solver into a differentiable layer. We strengthen modularity by introducing an explicit characterization of a QP’s primal–dual solution in terms of its active constraint set. Our key insight is that once the active inequalities are known, both the solution and its derivative can be computed from simplified linear systems that share the same matrix. This formulation not only fully decouples optimization from differentiation but also enables differentiation of solvers that output only primal solutions, by completing the corresponding dual variables at negligible computational cost.



We implement dQP as an open-source, minimal-overhead layer on top of \texttt{qpsolvers} \citep{qpsolvers2024}, enabling seamless integration with over 15 commercial and free solvers, and straightforward extension to additional ones. Our modular approach allows users to select the best solver for their task while providing differentiability. We evaluate dQP on a benchmark of over 2,000 diverse QPs and demonstrate superior performance compared to existing methods, particularly in large-scale structured sparse problems, as demonstrated in Figure~\ref{fig:structure}. 

\textbf{Our contributions:}
\begin{enumerate}[leftmargin=*]
\item We design and implement a modular differentiable layer compatible with any QP solver. Our open-source code is available at \url{https://github.com/cwmagoon/dQP}.
\item We demonstrate state-of-the-art performance in solving and differentiating large-scale, sparse QPs.
\item We introduce an explicit, simplified characterization of a QP's solution and gradient in terms of its active constraints, which underpins our modular framework.
\end{enumerate}

\begin{figure}[t]
\centering
\includegraphics
{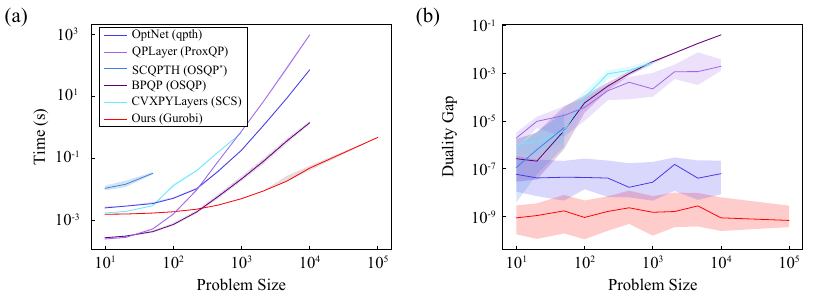} 
\vskip -0.05in
\caption{Comparison of differentiable QP methods for projection onto the probability simplex, evaluated by (a) total solve and differentiation time and (b) solution accuracy (duality gap). For moderately sized problems, our approach, using the Gurobi QP solver, outperforms existing methods on both metrics. As problem size increases, our method remains efficient, while others become intractable.}
\label{fig:structure}
\vskip -0.1in
\end{figure}

\section{Related Works}
\label{sect:related}

\textbf{Differentiable Optimization.} 
OptNet \citep{amos2017optnet, amos2019differentiable} differentiates QPs through their optimality conditions, and focuses on small dense problems for GPU batching. They solve the full Jacobian system efficiently by reusing the factorization employed in their custom interior-point method. However, as noted in \citep{bambade2024qplayer}, this comes at the cost of ill-conditioning due to symmetrization. More recent differentiable QP methods include Alt-Diff and SCQPTH \citep{sun2022alternating,butler2023efficient,butler2023scqpth} which use first-order ADMM and approximately differentiate the fixed point map, and QPLayer \citep{bambade2024qplayer} which accommodates infeasibility with extended conservative Jacobians. Similarly to OptNet, several of these are tightly integrated with custom algorithms, often to enable access to internal computations required for differentiation. Alt-Diff is coupled with a custom ADMM method, SCQPTH reimplements OSQP, and QPLayer is built on ProxQP.
Several works have noticed the importance of the active constraint set in differentiating constrained optimization problems \citep{Amos2017ICNNArgminDiff, gould2022declarative, Paulus2021IntegerProgramming}, as well as in the context of quadratic programming \citep{Amos2018MPC, bambade2024qplayer, pan2024bpqp, Dy2018QPwithActiveSetandActiveDiff}. A common observation is that the algebraic system obtained through implicit differentiation can be simplified by removing rows corresponding to inactive constraints. Some works have additionally observed that backpropagation itself can be cast as a distinct equality-constrained QP parameterized by the incoming gradients \citep{Amos2017ICNNArgminDiff, pan2024bpqp}. However, existing approaches have not used these observations to create a differentiable layer that supports arbitrary black-box QP solvers, missing the opportunity to fully decouple optimization and differentiation.
Other classes of optimization problems, such as convex cone programs \citep{agrawal2019diffcp} and mixed-integer programs \citep{Paulus2021IntegerProgramming}, have also been differentiated. Some frameworks \citep{cvxpylayers2019,blondel2022efficient, Theseus, optaxdeepmind2020jax, torchopt, besanccon2024flexible, LPGD, schaller2025codegenerationsolvingdifferentiating} provide differentiable interfaces to broader classes of optimization problems, but their support of QP has significant limitations. CVXPYLayers \citep{cvxpylayers2019} reformulates the QP into a cone program to use diffcp internally \citep{agrawal2019diffcp} and, as a result, does not support specialized QP solvers, relying exclusively on the cone solvers SCS, ECOS, and Clarabel. The framework Theseus \citep{Theseus} directly handles only unconstrained problems and similarly lacks support for QP-specific solvers. TorchOpt \citep{torchopt} and Optax \citep{optaxdeepmind2020jax} support unconstrained optimizers used in meta-learning, but require user-defined optimality conditions and user-supplied solvers to handle problems such as QPs. JAXopt \citep{blondel2022efficient} includes an implicit differentiation wrapper for CVXPY, which requires symbolic compilation of the QP and does not support sparse matrices; for sparse QPs, JAXopt includes a differentiable re-implementation of OSQP (similar to SCQPTH). 
Altogether, these drawbacks lead to subpar performance which is reported for some generic methods in previous work on differentiable QPs (\eg, \citep{bambade2024qplayer}). For completeness, Table \ref{tab:diff_methods} in Appendix \ref{app:methods} summarizes 24 relevant methods, including those discussed above.

\textbf{Implicit Layers.} 
Optimization layers are a class of implicit layers that use implicit differentiation to compute gradients of solution mappings that lack closed-form expressions \citep{Kolter2020}. Another class of implicit layer are deep equilibrium models, which are defined by fixed-point mappings and can be interpreted as infinitely deep networks \citep{Bai2019DeepEq, Kawaguchi2021Theory, Ghaoui2021FixedPoint, gurumurthy2021joint, Winston2020MonotoneDeepEq, Bai2020MultiscaleDEQ}. Similar techniques extend beyond algebraic equations to neural ODEs, where the adjoint state method from parametric PDE control is applied \citep{Lions1971AdjointOptControlPDE, Xue2020PDEConstrainedOptAdjoint, Beaston2020PDEOptforMetaMaterial,Chen2018NeuralODE}. Implicit differentiation also plays a key role in bi-level programming \citep{colson2007overview, kunisch2013bilevel, gould2016differentiating, alesiani2023implicit} and meta-learning, where it enables optimization of the outer learning loop \citep{finn2018learning, andrychowicz2016learning, hochreiter2001learning, hospedales2021meta, rajeswaran2019meta, Rajiv2024MetaLearntoWarmstart}. Alternative methods bypass implicit differentiation using approximations -- for example, by applying automatic differentiation to iterative algorithms via loop unrolling \citep{belanger2016structured, belanger2017end, metz2019understanding, Oh2022Unrolling}, or by differentiating a single iteration or using Jacobian-free techniques for fixed-point mappings \citep{Geng2021DEQ, jacobianfree2022, bolte2023onestep}.

\textbf{Sensitivity Analysis and Parametric Programming.} 
There is extensive mathematical theory on the local behavior of optimization problems under perturbations \citep{Rockafellar1970ConvexAnalysis, Rockefellar1998VariationalAnalysis}, particularly regarding solution sensitivity and stability \citep{fiacco1983introduction, fiacco1990sensitivity, fiacco1968nonlinear, Lee2010QPBook, bonnans2013perturbation}. While the implicit function theorem is central to this analysis, applying it to optimization problems requires an intermediate step: reformulating the problem through its optimality conditions, which demands several regularity assumptions. Sensitivity theory supports applications such as multi-parametric programming \citep{pistikopoulos2020multi}, including model predictive control, where problems must often be solved repeatedly for many parameter values, increasing computational costs. To address this, \citep{Bemporad2002} observed that QPs admit closed-form solutions when the active set is known in advance, enabling offline precomputation. The parameter space can be partitioned into regions with fixed active sets \citep{Spjøtvold2006}, where the solution is stable under perturbations. This idea continues to inform modern methods \citep{ferreau2014qpoases, Diogo2022, Arnström2024}.

\section{Approach}
\label{sect:approach}
We begin by formulating the problem and establishing the basic theory of QP differentiation. Then, we connect this foundation to our central theoretical observation and conclude with our straightforward algorithm derived from it. We note that various subparts of our discussion have been used to develop methods for differentiating QPs, see Section~\ref{sect:related}. However, no single work has fully developed the explicit theory we present or leveraged it to practically implement a fully modular differentiable QP layer.

\subsection{Problem Setup: Differentiating QPs}

We consider a quadratic program in standard form,
\begin{equation}
\begin{aligned}
z^*(\theta) = \argmin_{z} \quad & \frac{1}{2}z^T P(\theta) z + q(\theta)^T z \\
\st \quad & A(\theta) z = b(\theta)  \\
 & C(\theta) z \leq d(\theta), \\
\end{aligned} \label{eqn:QP}
\end{equation}
where $P \in \Real^{n \times n}, q \in \Real^n, A  \in  \Real^{p \times n}, b  \in \Real^p, C  \in  \Real^{m \times n}$ and $d  \in \Real^m$ are smoothly parameterized by some $\theta \in \Real^s$. We assume that the QP is feasible and strictly convex (\ie, $P \succ 0$). To simplify notation, in the following we omit $\theta$. 

This work focuses on computing $\diffth z^*(\theta) =  \frac{\partial}{\partial \theta}z^*(\theta)$, the derivative of the optimal point of the QP \eqref{eqn:QP} with respect to the parameters $\theta$. Intuitively, this derivative quantifies the change in the optimal point of the QP in response to a change in its parameters $\theta$. Our goal is to compute $\diffth z^*(\theta)$ \textit{efficiently} and \textit{independently} of the method used to approximate the optimal point $z^*(\theta)$. 

An important use case, highlighted in recent work on differentiable optimization, involves incorporating a QP of the form \eqref{eqn:QP} as the $\ell$-th layer of a neural network. In this setting, the input to the QP layer, $x_{\ell}$, serves as the parameter vector $\theta$ that defines the QP’s objective and constraints, and the output is the optimal solution $x_{\ell+1} = z^*(x_{\ell})$. Generally, training such a network requires backpropagating gradients through each layer, which involves computing the Jacobian $\frac{\partial x_{\ell+1}}{\partial x_{\ell}}$. For a QP layer, this Jacobian corresponds to $\diffth z^*(\theta)$, the derivative of the optimal point with respect to the problem parameters. This derivative is also crucial in descent-based methods for solving bi-level optimization problems \cite{colson2007overview}.

\subsection{KKT Conditions and Sensitivity Analysis}
Our goal is to differentiate QPs using only the solution returned by a black-box numerical solver. To do so, we first establish the necessary theoretical foundations. As is standard in optimization, the required derivatives are obtained via sensitivity analysis of the KKT conditions. This section distills key ideas from optimization theory, sensitivity and parametric analysis, and differentiable programming, framing them in the context of QPs to support the development of dQP.

\label{sect:opt_sens}
\paragraph{Optimality Conditions.} 
The first-order Karush–Kuhn–Tucker (KKT) conditions \citep{Karush1939, Kuhn1951, boyd2004convex, wright2006numerical} provide a useful algebraic characterization of the optimal points of constrained optimization problems. 
For the QP \eqref{eqn:QP}, the KKT conditions take the form, 
\begin{equation}
\begin{aligned}
Pz^* + q + A^T \lambda^* + C^T \mu^* &= 0 \\
Az^* - b = 0, \ Cz^* - d \leq 0, \ \mu^* &\geq 0 \\
D( \mu^* ) (Cz^* - d) &= 0, 
\end{aligned} \label{eqn:KKT}
\end{equation}
where $D(\mu^*) = \textrm{diag} (\mu^*)$, and the additional variables $\lambda^* \in \Real^p$ and $\mu^* \in \Real^m $ are the optimal dual variables of the linear equalities and inequalities, respectively. The primal-dual solution of the QP \eqref{eqn:QP} is defined by $\zeta^*(\theta) = (z^*(\theta),\lambda^*(\theta),\mu^*(\theta))$. Under strict convexity and feasibility, the QP \eqref{eqn:QP} has a unique solution $\zeta^*(\theta)$, and the KKT conditions \eqref{eqn:KKT} are necessary and sufficient for its optimality.

\paragraph{Active Set and Complementary Slackness.} 
The last equation in \eqref{eqn:KKT}, the nonlinear complementary slackness condition, plays a central role in our work. Intuitively, it encodes the two possible states of each inequality constraint in \eqref{eqn:QP}, $(Cz^* - d)_j\leq 0$. Either (i) the constraint is \emph{active}, meaning it holds with equality $(Cz^* - d)_j = 0$, in which case $\mu^*_j\geq 0$; or (ii) it is \emph{inactive}, satisfied with strict inequality, in which case $\mu^*_j = 0$. Notably, if a constraint is inactive, the same optimal solution $z^*$ would be obtained even if that constraint were removed. We denote the active set by $J(\theta)=\{j : \parr{C(\theta)z^*(\theta) - d(\theta)}_j = 0\}$.

\paragraph{Derivatives via Sensitivity Analysis.}
To differentiate QPs, we apply the Basic Sensitivity Theorem (Theorem 2.1 in \citep{Fiacco1976}), which underpins differentiation of the KKT conditions with respect to $\theta$. Differentiability at $\theta$ requires the additional assumption of strict complementary slackness, ruling out the degenerate case where both $(Cz^* - d)_j = 0$ and $\mu^*_j = 0$, thus ensuring that the active set remains unchanged under small perturbations of $\theta$. Under this assumption, the primal-dual point $\zeta^*(\theta) = (z^*(\theta), \lambda^*(\theta), \mu^*(\theta))$ is differentiable in a neighborhood of $\theta$, optimal for the QP \eqref{eqn:QP}, uniquely satisfies the KKT conditions \eqref{eqn:KKT}, and maintains strict complementary slackness. Crucially, the active set $J(\theta)$ remains fixed in this neighborhood.

With the active set stable, the equality conditions in \eqref{eqn:KKT} locally characterize $\zeta^(\theta)$. Implicit differentiation of these yields the Jacobians of the solution $\diffth \zeta^*$ in terms of the linear system,
\begin{equation}
\begin{bmatrix}
P &  A^T & C^T \\
A & 0 & 0 \\
D(\mu^*) C & 0 & D(Cz^*-d) \\
\end{bmatrix} 
\begin{bmatrix}
\diffth z^* \\  \diffth \lambda^* \\ \diffth \mu^*
\end{bmatrix} = 
-\begin{bmatrix}
\diffth P z^* + \diffth q + \diffth A^T\lambda^* + \diffth C^T \mu^* \\ \diffth A z^* - \diffth b \\ D(\mu^*) ( \diffth C z^* - \diffth d) 
\end{bmatrix}.
\label{eqn:full_derivative}
\end{equation}
Under the assumptions of the Basic Sensitivity Theorem, the linear system \eqref{eqn:full_derivative} is invertible. It degenerates exactly in the presence of weakly active constraints $\mu_j^* = (Cz^*-d)_j = 0$, for which the QP is non-differentiable (see, \eg, \cite{amos2017optnet}). For any inactive constraint $j \notin J$, the dual variable $\mu_j^*$ vanishes, and thus the corresponding rows and columns of \eqref{eqn:full_derivative} can be removed, yielding the simplified reduced form,
\begin{equation}
\begin{bmatrix}
P &  A^T & C_J^T \\
A & 0 & 0 \\
C_J & 0 & 0 \\
\end{bmatrix} 
\begin{bmatrix}
\diffth z^* \\  \diffth \lambda^* \\ \diffth \mu_J^*
\end{bmatrix} = 
-\begin{bmatrix}
\diffth P z^* + \diffth q + \diffth A^T\lambda^* + \diffth C_J^T \mu_J^* \\ \diffth A z^* - \diffth b \\ \diffth C_J z^* - \diffth d_J
\end{bmatrix},
\label{eqn:active_derivative}
\end{equation}
where $\mu^*_J$, $C_J$ and $d_J$ denote  restriction to rows corresponding to active inequality constraints $j\in J$. 
\subsection{Extracting Derivatives from a QP Solver's Solution}

With the above theory, we derive our main theoretical results and introduce \textbf{dQP}, a straightforward algorithm for efficient and robust differentiation of any black-box QP solver.

Our approach stems from two straightforward yet powerful insights: (1) given the \textit{primal} solution of a QP, the active set can be easily identified; (2) once the active set is known, both the primal-dual optimal point and its derivatives can be derived explicitly in closed-form. Furthermore, these quantities can be computed efficiently via a single matrix factorization of a reduced-dimension symmetric system. 

\begin{wrapfigure}{r}{0.40\textwidth} 
\centering
\vspace{-0.25in}
\includegraphics[width=0.40\textwidth]{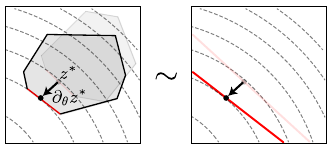} 
\vskip -0.1in
\caption{Illustration of active set differentiation. Left: a QP is shown by its quadratic level sets and polyhedral feasible set; the solution lies on a facet of the boundary; perturbations of the constraints lead to perturbations in the solution. Right: the perturbation of the solution remains the same when inactive constraints are eliminated.} \label{fig:schematic}
\vskip -0.3in
\end{wrapfigure}

These observations lead to a simple algorithm: first, solve the optimization problem using \emph{any} QP solver; then, identify the active set from the solution and solve a linear system to compute the derivatives. Consequently, we can define a ``backward pass'' for any layer that uses a QP solver, allowing for the seamless integration of any solver best suited to the problem, thus leveraging years of research and development invested in state-of-the-art QP solvers. 

\paragraph{Locally Equivalent Equality-Constrained QP.}
Consider the QP \eqref{eqn:QP} and its optimal point $\zeta^*(\theta)$, along with the set $J(\theta)$ of active constraints, see Section \ref{sect:opt_sens}.  We define the reduced equality-constrained quadratic program, obtained by removing inactive inequalities and converting active inequality constraints into equality constraints,
\begin{equation}
\begin{aligned}
z^*(\theta) = \argmin_{z} \quad & \frac{1}{2}z^T P(\theta) z + q(\theta)^T z\\
\st \quad & \begin{bmatrix} A(\theta) \\ C(\theta)_{J(\theta)} \end{bmatrix} z = \begin{bmatrix} b(\theta) \\ d(\theta)_{J(\theta)} \end{bmatrix}. \\
\end{aligned} \label{eqn:rQP} 
\end{equation}

Under the assumptions of Section \ref{sect:opt_sens}, this simpler QP is, in fact, \emph{locally} equivalent to the QP \eqref{eqn:QP}, as illustrated in Figure \ref{fig:schematic}. Moreover, it provides an explicit expression for both the primal-dual optimal point and its derivatives:
\begin{theorem}
The QP \eqref{eqn:rQP} is locally equivalent to the reduced equality-constrained QP \eqref{eqn:QP} and its solution $\zeta^*(\theta) = (z^*(\theta),\lambda^*(\theta),\mu^*(\theta))$ admits the explicit form
    \begin{equation}
    \begin{bmatrix}
    z^* \\  \lambda^* \\ \mu_J^*
    \end{bmatrix} = 
    \begin{bmatrix}
    P &  A^T & C_J^T \\
    A & 0 & 0 \\
    C_J & 0 & 0 \\
    \end{bmatrix}^{-1} 
    \begin{bmatrix}
    -q \\ b \\ d_J
    \end{bmatrix}. \label{eqn:explicit_soln}
    \end{equation}
    Furthermore, the optimal point can be explicitly differentiated to obtain
    \begin{equation}
    \begin{bmatrix}
    \diff_\theta z^* \\  \diff_\theta \lambda^* \\ \diff_\theta \mu_J^*
    \end{bmatrix} = 
    \begin{bmatrix}
    P &  A^T & C_J^T \\
    A & 0 & 0 \\
    C_J & 0 & 0 \\
    \end{bmatrix}^{-1} 
    \parr{
    \begin{bmatrix}
    -\diff_\theta q \\ \diff_\theta b \\ \diff_\theta d_J
    \end{bmatrix}
    -
    \begin{bmatrix}
    \diff_\theta P &  \diff_\theta A^T & \diff_\theta C_J^T \\
    \diff_\theta A & 0 & 0 \\
    \diff_\theta C_J & 0 & 0 \\
    \end{bmatrix}
    \begin{bmatrix}
    z^* \\  \lambda^* \\ \mu_J^*
    \end{bmatrix}
    }. \label{eqn:explicit_diff}
    \end{equation}
    
 \label{thm:explicit_diff}
\end{theorem}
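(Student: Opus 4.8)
The plan is to establish the statement in three stages: local equivalence of the two QPs, the explicit formula \eqref{eqn:explicit_soln} for the primal-dual point, and the explicit formula \eqref{eqn:explicit_diff} for its derivative. For the \emph{local equivalence}, I would argue by comparing KKT systems. Under the assumptions of Section~\ref{sect:opt_sens} (strict convexity, feasibility, strict complementary slackness), the original QP \eqref{eqn:QP} has a unique primal-dual point $\zeta^*(\theta)$ with a stable active set $J=J(\theta)$, and $\mu_j^*=0$ for $j\notin J$. I would show that $(z^*,\lambda^*,\mu_J^*)$ satisfies the KKT conditions of the equality-constrained QP \eqref{eqn:rQP}: stationarity follows by dropping the vanishing $\mu_j^*C_j^T$ terms from the stationarity equation of \eqref{eqn:KKT}; primal feasibility of \eqref{eqn:rQP} is exactly $Az^*=b$ together with $C_Jz^*=d_J$ (the latter is the definition of the active set); and \eqref{eqn:rQP} has no inequality constraints so there is nothing further to check. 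Since $P\succ0$, the equality-constrained QP \eqref{eqn:rQP} is itself strictly convex with a unique solution, so its solution coincides with $z^*$. Conversely, $z^*$ is feasible for \eqref{eqn:QP} (it satisfies the inactive constraints strictly by strict complementary slackness), so the two problems share the same minimizer — locally, i.e.\ for $\theta$ in the neighborhood where $J$ is constant. This is the step I expect to require the most care, because "local equivalence" must be stated precisely: it is an equivalence of \emph{solution maps} on a neighborhood of the base parameter, and the non-negativity $\mu_J^*\ge0$ (needed so that $z^*$ is genuinely the minimizer of \eqref{eqn:QP} and not merely a KKT point of \eqref{eqn:rQP}) is what leverages strict complementary slackness.

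For \emph{the explicit solution} \eqref{eqn:explicit_soln}, I would write the KKT (equivalently, first-order Lagrangian) conditions of the equality-constrained QP \eqref{eqn:rQP}. With Lagrangian $\tfrac12 z^TPz + q^Tz + \lambda^T(Az-b) + \mu_J^T(C_Jz-d_J)$, stationarity gives $Pz^*+q+A^T\lambda^*+C_J^T\mu_J^*=0$, and primal feasibility gives $Az^*=b$, $C_Jz^*=d_J$. Stacking these three linear equations is precisely the block system in \eqref{eqn:explicit_soln}. Invertibility of the KKT matrix is the content of the Basic Sensitivity Theorem setup already invoked for \eqref{eqn:active_derivative}: it follows from $P\succ0$ together with $\begin{bsmallmatrix}A\\ C_J\end{bsmallmatrix}$ having full row rank (a consequence of the linear independence constraint qualification / the non-degeneracy built into the Basic Sensitivity Theorem's hypotheses), so the system can be solved uniquely, giving \eqref{eqn:explicit_soln}. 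I would note this matrix is exactly the one appearing in the reduced sensitivity system \eqref{eqn:active_derivative}, which sets up the final step cleanly.

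For \emph{the derivative} \eqref{eqn:explicit_diff}, I would differentiate the identity from \eqref{eqn:explicit_soln} with respect to $\theta$. Writing $K(\theta)$ for the block KKT matrix and $r(\theta)=(-q,b,d_J)$ for the right-hand side, the solution satisfies $K(\theta)\,\zeta^*(\theta)=r(\theta)$ identically in a neighborhood (using that $J$ is constant there, so the block structure does not change). Differentiating, $K\,\diffth\zeta^* + (\diffth K)\,\zeta^* = \diffth r$, hence $\diffth\zeta^* = K^{-1}\big(\diffth r - (\diffth K)\,\zeta^*\big)$, which is exactly \eqref{eqn:explicit_diff} once one records that $\diffth K$ is the block matrix with entries $\diffth P,\diffth A,\diffth A^T,\diffth C_J,\diffth C_J^T$ and zeros, and $\diffth r = (-\diffth q,\diffth b,\diffth d_J)$. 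Here I would remark that differentiability of $\zeta^*$ is guaranteed by the Basic Sensitivity Theorem (Theorem 2.1 in \citep{Fiacco1976}) under strict complementary slackness, and that $\diffth C_J$ simply means the rows of $\diffth C$ indexed by the (locally constant) set $J$ — no differentiation "through" the index set $J$ is needed, which is the conceptual crux that makes the formula clean. One should also observe that the resulting $\diffth\zeta^*$ indeed agrees with the solution of the reduced sensitivity system \eqref{eqn:active_derivative}, since $\diffth C_J^T\mu_J^*$ appears in the same place in both; this cross-check closes the argument.
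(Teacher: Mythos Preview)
Your proposal is correct and follows essentially the same three-stage argument as the paper: reduce the full KKT system to the active-set KKT system of \eqref{eqn:rQP} (invoking the Basic Sensitivity Theorem for local constancy of $J$), read off the linear system $K_J\zeta_J^*=v_J$ and invert, then differentiate. The only cosmetic difference is that you differentiate the identity $K\zeta^*=r$ via the product rule to get $K\,\diffth\zeta^* + (\diffth K)\zeta^* = \diffth r$, whereas the paper differentiates $\zeta^*=K^{-1}r$ using the derivative-of-inverse formula $\diffth(K^{-1})=-K^{-1}(\diffth K)K^{-1}$; both yield \eqref{eqn:explicit_diff} immediately and neither is materially different from the other.
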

A proof of this Theorem, based on the Basic Sensitivity Theorem \citep{Fiacco1976}, is provided in Appendix \ref{app:proof}, along with a calculation of the derivatives using differential matrix calculus \citep{petersen2008matrix, Magnus1988MatrixDC}. We note that this result is closely related to analyses studied in multi-parametric programming \citep{Bemporad2002, pistikopoulos2020multi, Spjøtvold2006, Arnström2024, Diogo2022}. 

\paragraph{Explicit Differentiation.} Notably, for quadratic programming, the Basic Sensitivity Theorem allows us to bypass implicit differentiation techniques \citep{Krantz2003}. We emphasize that this does not imply that the general solution or its active set admit a closed-form expression. Rather, while we perform explicit differentiation, the implicit function theorem remains key in establishing the local equivalence between the original and reduced problems -— the resulting derivatives are the same, though the derivations differ.
The derivatives in \eqref{eqn:explicit_diff} are obtained via ordinary (explicit) differentiation of the closed-form solution to the reduced QP \eqref{eqn:explicit_soln}, whereas those in \eqref{eqn:active_derivative} arise from implicit differentiation of the full nonlinear KKT conditions \eqref{eqn:KKT}, followed by pruning inactive constraints. This distinction, formalized in Theorem \ref{thm:explicit_diff}, underscores a critical computational insight: once a black-box solver returns the primal solution, the active set can be identified and the derivatives computed directly via \eqref{eqn:explicit_diff}.
Moreover, if the solver returns only the primal solution, the corresponding dual variables can be recovered using \eqref{eqn:explicit_soln}. Since both \eqref{eqn:explicit_soln} and \eqref{eqn:explicit_diff} rely on the same KKT matrix $K_J$, a single matrix factorization (e.g., via SuperLU \citep{SuperLU}) suffices for both steps, adding negligible overhead. These insights culminate in the core algorithm of dQP, summarized in Algorithm \ref{alg:qp_diff}.

\paragraph{Numerical Computation.} 
Our approach enables compact and efficient gradient computation. The linear system in \eqref{eqn:explicit_diff}, used to compute both derivatives and dual variables, is symmetric and of reduced size. In contrast, implicit differentiation of the full KKT conditions \eqref{eqn:QP} yields a significantly larger, asymmetric system \eqref{eqn:full_derivative}. Beyond simplifying the derivative computation, our approach enables the use of fast, specialized linear solvers that exploit the reduced systems symmetric indefinite KKT matrix structure (\eg, using an LDL factorization as in QDLDL \citep{osqp, davis2005algorithm}).
\begin{wrapfigure}{r}{0.50\textwidth} 
\vspace{-0.25in}
\centering
\includegraphics[width=0.50\textwidth]{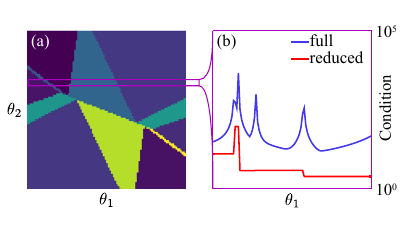}
\vskip -.1in
\caption{(a) Active-set parameter space, coloring regions where the active set is constant. (b) Condition number of the full and reduced linear systems along a line in parameter space. }
\label{fig:active_set_space}
\vskip -.2in
\end{wrapfigure}
Empirically, we observe that the reduced linear system \eqref{eqn:explicit_diff} is often significantly better conditioned than its full counterpart. Figure \ref{fig:active_set_space} illustrates this with an example of a QP governed by two parameters $\theta = (\theta_1, \theta_2)$ of \citep{Spjøtvold2006}, calculated using DAQP \citep{arnstrom2022dual}. Eliminating inactive constraints improves conditioning significantly.

Figure~\ref{fig:active_set_space} also highlights a key challenge: computing derivatives near non-differentiable singularities where the active set changes and some constraints become weakly active. In such regions, implicit differentiation becomes severely ill-conditioned. Our method is also affected, primarily through difficulty in determining the correct active set at an approximate solution. Several strategies have been proposed for improving active set identification \citep{Cartis2016, Oberlin2006identify, Burke1988identify}. Our implementation includes an optional refinement heuristic (Appendix~\ref{app:mpc_cond}), though in all our experiments (Section~\ref{sect:experiments}), simple hard thresholding of the primal residual $r_j = (Cz^* - d)_j \geq -\epsilon_J $ proved robust.

\setlength{\textfloatsep}{10pt}
\begin{algorithm}[t] 
\caption{-- \textbf{dQP}: Differentiation through Black-box Quadratic Programming Solvers}\label{alg:qp_diff}
\begin{algorithmic}[1]
\REQUIRE  $P, q, A, b, C, d$, and tolerance $\epsilon_J$
\ENSURE $z^*, \lambda^*, \mu^*$ and $\diffth z^*, \diffth \lambda^*, \diffth\mu^*$
\STATE Solve QP \eqref{eqn:QP} with any solver for the primal solution $z^*$ (and $\lambda^*,\mu^*$ if available)
\STATE Compute the active set by thresholding: 
    \hfill 
    $J \gets \{ j : \parr{Cz^* - d}_j \geq -\epsilon_J \}$ 
\STATE Factorize the reduced KKT system matrix: 
    \hfill
    $K_J = \begin{bsmallmatrix}
    P &  A^T & C_J^T \\
    A & 0 & 0 \\
    C_J & 0 & 0 \\
    \end{bsmallmatrix}$
\STATE Compute $\lambda^*,\mu^*$ (if not obtained in step (1)): 
    \hfill
    $\begin{bsmallmatrix}
    z^* \\  \lambda^* \\ \mu_J^*
    \end{bsmallmatrix} 
    \gets K_J^{-1}
    \begin{bsmallmatrix}
    -q \\ b \\ d_J
    \end{bsmallmatrix}$
\STATE Compute the derivatives: 
    \hfill
    $
    \begin{bsmallmatrix}
    \diff_\theta z^* \\  \diff_\theta \lambda^* \\ \diff_\theta \mu_J^*
    \end{bsmallmatrix}
    \gets 
    K_J^{-1} 
    \left(
    \begin{bsmallmatrix}
    -\diff_\theta q \\ \diff_\theta b \\ \diff_\theta d_J
    \end{bsmallmatrix}
    -
    \begin{bsmallmatrix}
    \diff_\theta P &  \diff_\theta A^T & \diff_\theta C_J^T \\
    \diff_\theta A & 0 & 0 \\
    \diff_\theta C_J & 0 & 0 \\
    \end{bsmallmatrix}
    \begin{bsmallmatrix}
    z^* \\  \lambda^* \\ \mu_J^*
    \end{bsmallmatrix}
    \right)
    $
\end{algorithmic}
\end{algorithm}

\paragraph{Implementation.}
Our open-source implementation is available at \url{https://github.com/cwmagoon/dQP}. We implement \textbf{dQP} (Algorithm~\ref{alg:qp_diff}) as a fully differentiable PyTorch module \citep{pytorch}, providing an intuitive interface for integrating differentiable QPs into machine learning and bi-level optimization workflows. The implementation supports both dense and sparse problems end-to-end, with appropriate QP and linear solver support. As a PyTorch module, \eqref{eqn:explicit_diff} must be rendered as a backward pass; this is described in Appendix~\ref{app:backprop}. To ensure modularity, the forward pass supports any QP solver interfaced via the open-source \textit{qpsolvers} library \citep{qpsolvers2024}, which provides lightweight access to over 15 commercial and free open-source solvers and easily supports the integration of new ones. We likewise offer flexibility in the choice of linear solver for differentiation, including support for large-scale sparse solvers like Pardiso \citep{schenk2004solving} and symmetric indefinite solvers like QDLDL \citep{osqp, davis2005algorithm}. For users unsure of which solver to use, dQP includes a profiling tool to help identify the best-performing QP solver for a given problem. Additional implementation details such as constraint normalization, handling non-differentiability, and warm-starting options for bi-level optimization are discussed in Appendix~\ref{app:implementation}.

\begin{table*}[t] 
\centering
\caption{Performance of differentiable QP methods for 129 sparse problems from the Maros-Meszaros (MM) dataset \citep{maros1999repository}.
}

\vskip 0.1in
\begin{adjustbox}{max width=\textwidth}
\begin{tabular}{lcccccc|cccccc}
\toprule
\multirow{2}{*}{Solver} & \multicolumn{6}{c|}{Full Dataset} & \multicolumn{6}{c}{Subset of Problems Solved by All Methods} \\
\cmidrule{2-13}
 & \# Probs & Avg Fwd & Avg Bwd & Avg Total & Avg Bwd/Total & Accuracy &  \# Probs & Avg Fwd & Avg Bwd & Avg Total & Avg Bwd/Total & Accuracy \\
 & Solved &[ms] & [ms] & [ms] & [\%]  & [duality gap] & Solved &[ms] & [ms] & [ms] & [\%] & [duality gap] \\
\midrule
dQP (QPBenchmark)& \textbf{129} & \textbf{471} & 996 & 1467 & 57\% & $\mathbf{7.39 \times 10^{-6}}$ & 24 & \textbf{10} & \textbf{83} & \textbf{93} & 35\% & $\mathbf{1.73 \times 10^{-7}}$ \\
QPLayer (ProxQP) & 77 & 15089 & 632 & 15721 & 18\% & $2.21 \times 10^{-2}$ & 24 & 2828 & 433 & 3261 & 29\% & $1.77 \times 10^{-4}$ \\
OptNet (qpth) & 38 & 39329 & 2139 & 41468 & \textbf{6\%} & $2.36 \times 10^{-3}$ & 24 & 9199 & 559 & 9758 & \textbf{7\%} & $1.71 \times 10^{-4}$ \\
SCQPTH (OSQP$^*$) & 55 & 16344 & 6551 & 22895 & 13\% & $1.81 \times 10^{-2}$ & 24 & 14048 & 3019 & 17067 & 14\% & $8.75 \times 10^{-3}$ \\
\bottomrule
\end{tabular}
\end{adjustbox}
\label{tab:solver_performance}
\end{table*}

\section{Experimental Results}
\label{sect:experiments}


We have extensively tested dQP to ensure its robustness and evaluate its performance against existing methods for differentiable quadratic programming. Notably, we demonstrate dQP's strengths in handling large-scale structured and sparse problems, thus complementing custom differentiable GPU-batched solvers such as OptNet \citep{amos2017optnet}, which are optimized for solving many small, dense problems simultaneously. Given this focus, and considering the limited availability of state-of-the-art GPU-batchable QP solvers, we conduct our experiments on CPUs, similar to prior works \citep{bambade2024qplayer,butler2023scqpth,sun2022alternating,cvxpylayers2019}. Our evaluation includes a large benchmark of over 2,000 challenging sparse and dense QPs taken from public and randomly generated problem datasets, designed to test dQPs robustness and performance. 
Additional experimental details are provided in Appendix~\ref{app:experiments}.


\paragraph{Modularity and Performance.} 
We tested dQP on the QP Benchmark suite \citep{qpbenchmark2024}, focusing on 129 large-scale sparse problems from the standard Maros-Meszaros (MM) dataset \citep{maros1999repository}, which are widely used as stress tests for QP solvers. We compared dQP against other differentiable QP methods available in the PyTorch framework: OptNet \citep{amos2017optnet}, QPLayer \citep{bambade2024qplayer}, SCQPTH \citep{butler2023scqpth}, and CVXPYLayers \citep{cvxpylayers2019}; using the authors’ open-source implementations, each paired with its respective solver. 
We did not include other general frameworks that either lack direct support for generic QPs in PyTorch, showed subpar performance in our preliminary tests (consistent with findings reported in prior work \citep{bambade2024qplayer}), or rely on forward passes built on diffcp/CVXPYLayers (see Table \ref{tab:diff_methods}). 
For dQP’s forward pass, we selected the top-performing QP solver for each problem, as identified by QP Benchmark \citep{qpbenchmark2024}, to demonstrate the importance of enabling differentiation for the solver that is best suited to each task.

\begin{figure}[t] 
\centering 
\includegraphics[width=0.60\textwidth]{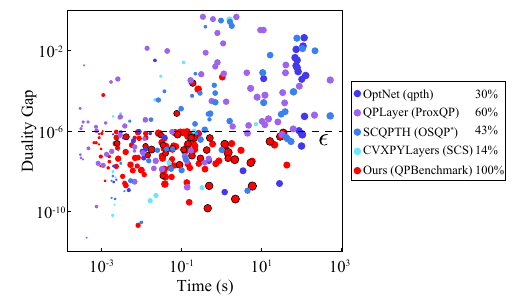}
\caption{
Accuracy versus total forward/backward solve for the Maros-Meszaros dataset \citep{maros1999repository}. Each point represents a solved problem; point size illustrates dimension; problems solved solely by dQP are circled. The legend shows percentages of success rates; the solvers dQP used and their counts are PIQP 100, Gurobi 9, ProxQP 9, Clarabel 7, OSQP 2, MOSEK 1, QPALM 1.
}
\label{fig:mega_scatter}
\end{figure}

The scatter plot in Figure~\ref{fig:mega_scatter} shows total runtime (forward + backward), duality gap (accuracy), and problem dimension (illustrated by point size) for each differentiable solver and problem. Aggregate performance across the full dataset and the subset of problems solvable by all methods is summarized in Table~\ref{tab:solver_performance}. CVXPYLayers failed on all but 17 small-scale problems and is therefore excluded from the reported statistics. The MM dataset proved especially challenging, with OptNet and SCQPTH solving fewer than 50\% of the problems. In contrast, dQP solved all MM problems and was the \emph{only} method to succeed on 38 of them (circled in the figure). Moreover, dQP achieved the best total forward/backward runtime and accuracy in 80\% and 78\% of all problems, respectively. It performed particularly well on large-scale instances (dimension over 1000), being fastest and most accurate in 97\% and 93\% of such cases. Additional experiments on 450 random dense QPs and 625 sparse QPs (dimensions $10$ to $10^4$) are provided in Appendix~\ref{sec:random_dense_sparse}, showing similarly strong performance.

\begin{table}[b]
\centering
\caption{Performance analysis for projection onto the probability simplex, formulated in \eqref{eqn:proj_simplex}. Additional details are provided in Appendix \ref{app:proj_simplex} and Table \ref{tab:performance_metrics_simplex}.}
\begin{adjustbox}{max width=\textwidth}
\begin{tabular}{llccccccc}
\toprule
\multirow{2}{*}{Solver} & \multirow{2}{*}{Metric} & \multicolumn{7}{c}{Problem Size} \\
\cmidrule{3-9}
 &  & 20 & 100 & 450 & 1000 & 4600 & 10000 & 100000 \\
\midrule
\multirow{2}{*}{dQP (Gurobi)} 
 & Accuracy & $\mathbf{1.07 \times 10^{-9}}$ & $\mathbf{8.88 \times 10^{-10}}$ & $\mathbf{2.26 \times 10^{-9}}$ & $\mathbf{1.47 \times 10^{-9}}$ & $\mathbf{2.72 \times 10^{-9}}$ & $\mathbf{9.55 \times 10^{-10}}$ & $\mathbf{6.67 \times 10^{-10}}$ \\
 & Time [ms] & 1.63 & 1.92 & $\mathbf{3.13}$ & $\mathbf{5.06}$ & $\mathbf{18.46}$ & $\mathbf{49.00}$ & $\mathbf{476.64}$ \\
\midrule
\multirow{2}{*}{OptNet (qpth)} 
 & Accuracy & $4.04 \times 10^{-8}$ & $4.24 \times 10^{-8}$ & $1.64 \times 10^{-8}$ & $2.67 \times 10^{-8}$ & $3.95 \times 10^{-8}$ & $6.08 \times 10^{-8}$ & -- \\
 & Time [ms] & $2.92$ & $5.19$ & $37.66$ & $182.99$ & $8514.65$ & $70856.43$ & -- \\
\midrule
\multirow{2}{*}{QPLayer (ProxQP)} 
 & Accuracy & $9.53 \times 10^{-6}$ & $3.65 \times 10^{-5}$ & $4.16 \times 10^{-4}$ & $2.19 \times 10^{-4}$ & $1.16 \times 10^{-3}$ & $1.94 \times 10^{-3}$ & -- \\
  & Time [ms] & $\mathbf{0.29}$ & $1.61$ & $77.56$ & $751.14$ & $79314.49$ & $946174.68$ & -- \\
\midrule
\multirow{2}{*}{BPQP (OSQP)} 
 & Accuracy & $2.04 \times 10^{-7}$ & $5.64 \times 10^{-5}$ & $9.66 \times 10^{-4}$ & $3.04 \times 10^{-3}$ & $1.76 \times 10^{-2}$ & $4.02 \times 10^{-2}$ & -- \\
 & Time [ms] & $0.32$ & $\mathbf{0.75}$ & $5.81$ & $21.68$ & $358.78$ & $1407.05$ & -- \\
\midrule
\multirow{2}{*}{CVXPYLayers (SCS)} 
 & Accuracy & $1.31 \times 10^{-6}$ & $9.47 \times 10^{-5}$ & $1.31 \times 10^{-3}$ & $2.79 \times 10^{-3}$ & -- & -- & -- \\
 & Time [ms] & $1.97$ & $13.42$ & 156.57 & 662.60 & -- & -- & -- \\
\midrule
\multirow{2}{*}{SCQPTH (OSQP$^*$)} 
 & Accuracy & $6.19 \times 10^{-7}$ & -- & -- & -- & -- & -- & -- \\
  & Time [ms] & $14.75$ & -- & -- & -- & -- & -- & -- \\
\bottomrule
\end{tabular}
\end{adjustbox}
\label{tab:summary_performance_metrics_simplex}
\end{table}

\begin{table}[t]
\centering
\caption{Performance analysis for projection onto chains, formulated in \eqref{eqn:proj_chain}. Additional details are provided in Appendix \ref{app:chain} and Table \ref{tab:performance_metrics_chain}.}
\begin{adjustbox}{max width=\textwidth}
\begin{tabular}{llccccccc}
\toprule
\multirow{2}{*}{Solver} & \multirow{2}{*}{Metric} & \multicolumn{7}{c}{Problem Size} \\
\cmidrule{3-9}
 &  & 200 & 500 & 1000 & 2000 & 4000 & 10000 & 100000 \\
\midrule
\multirow{2}{*}{dQP (Gurobi)} 
 & Accuracy & $2.73 \times 10^{-7}$ & $2.02 \times 10^{-6}$ & $3.79 \times 10^{-6}$ & $9.16 \times 10^{-6}$ & $2.64 \times 10^{-5}$ & $4.29 \times 10^{-5}$ & $\mathbf{2.81 \times 10^{-4}}$ \\
 & Time [ms] & $\mathbf{6.15}$ & $\mathbf{12.99}$ & $\mathbf{26.19}$ & $\mathbf{47.94}$ & $\mathbf{88.35}$ & $\mathbf{224.89}$ & $\mathbf{2432.64}$ \\
\midrule
\multirow{2}{*}{OptNet (qpth)} 
 & Accuracy & $\mathbf{6.97 \times 10^{-8}}$ & $\mathbf{1.75 \times 10^{-7}}$ & $\mathbf{9.22 \times 10^{-8}}$ & $\mathbf{2.43 \times 10^{-7}}$ & $\mathbf{2.60 \times 10^{-7}}$ & $\mathbf{1.98 \times 10^{-7}}$ & -- \\
 & Time [ms] & $25.38$ & $169.64$ & $907.25$ & $5491.56$ & $34799.98$ & $571710.06$ & -- \\
\midrule
\multirow{2}{*}{QPLayer (ProxQP)} 
 & Accuracy & $8.46 \times 10^{-5}$ & $8.78 \times 10^{-5}$ & $1.82 \times 10^{-4}$ & $2.97 \times 10^{-4}$ & $6.95 \times 10^{-4}$ & $1.03 \times 10^{-3}$ & -- \\
 & Time [ms] & $8.04$ & $81.93$ & $577.11$ & $3996.92$ & $30748.67$ & $471649.91$ & -- \\
\midrule
\multirow{2}{*}{SCQPTH (OSQP$^*$)} 
 & Accuracy & $1.67 \times 10^{-5}$ & $2.83 \times 10^{-5}$ & $4.76 \times 10^{-5}$ & $6.64 \times 10^{-5}$ & $7.80 \times 10^{-5}$ & $1.21 \times 10^{-4}$ & -- \\
 & Time [ms] & $13.20$ & $67.55$ & $407.13$ & $2755.28$ & $16628.15$ & $195462.18$ & -- \\
 \midrule
 \multirow{2}{*}{CVXPYLayers (SCS)} 
 & Accuracy & $1.69 \times 10^{-1}$ & $2.18 \times 10^{-1}$ & $2.97 \times 10^{-1}$ & -- & -- & -- & -- \\
 & Time [ms] & 129.26 & 683.34 & 2048.34 & -- & -- & -- & -- \\
\bottomrule
\end{tabular}
\end{adjustbox}
\label{tab:summary_performance_metrics_chain}
\end{table}

\paragraph{Scalability.} 
We evaluated dQP on large-scale sparse problems that are highly structured, a regime where state-of-the-art QP solvers have a significant advantage over less optimized solvers. Specifically, we tested dQP and other available differentiable QP solvers on two prototypical projection layers expressed as constrained QPs:
\begin{equation}
P_1(x) = \argmin_z \norm{x-z}_2^2 \ \ \st \ \ 0\leq z \leq 1, {\textstyle\sum} z_i = 1,    
\label{eqn:proj_simplex}
\end{equation}
and
\begin{equation}
P_2(x_1,\ldots,x_n) = \argmin_{z_1,\ldots,z_n} {\textstyle\sum} \norm{x_j-z_j}_2^2 \ \ 
\st \ \  \norm{z_j - z_{j+1}}_\infty \leq 1.    
\label{eqn:proj_chain}
\end{equation}
Results for $P_1$ are shown in Figure \ref{fig:structure}, demonstrating dQP's scalability compared to OptNet and QPLayer. Other methods fail on all but small problems (see Appendix \ref{app:proj_simplex}). In dimensions greater than 2000, dQP outperforms competing methods by 2-3 orders of magnitude in both speed and accuracy. Competing methods are limited to dense calculations and fail in dimensions beyond $10^4$. We note that $P_1$ is the projection onto the probability simplex, also known as SparseMAX, for which more efficient, non-QP-based methods exist \citep{balcan2016sparsemax}. Results for $P_2$, representing projection onto ``chains" with bounded links, exhibit similar scalability and are detailed in Appendix \ref{app:chain}.


\paragraph{Bi-Level Geometry Optimization.}
We further demonstrate the scalability of our approach on a bi-level optimization problem introduced in \cite{kovalsky2020non}:
\begin{align}
M^* = \argmin_{M} & \  \norm{\mu^*(M)}_2^2 \label{eqn:geo} \\ 
    \st \  \ & (v^*(M),\lambda^*(M),\mu^*(M) ) = \argmin_{v} \set{ \tr \parr{v^T M v} \ \ \st \ \ B v = u,\ C M v \succeq 0}.\nonumber  
\end{align}

In this problem $v\in\Real^{n \times 2}$ represents the $n$ vertex coordinates of a triangular mesh, $M\in\Real^{n \times n}$ is a parameterized Laplacian, and $B$, $u$ and $C$ encode boundary conditions. $\mu^*(M)$ is the dual variable corresponding to linear inequalities of the lower-level problem. The results described in \cite{kovalsky2020non} imply that if $\mu^*(M)$ vanishes then $v^*(M^*)$, at the optimal Laplacian $M^*$, represents a straight-edge intersection-free drawing of the mesh \cite{tutte1963draw}. The solution to \eqref{eqn:geo} is visualized in Figure \ref{fig:geometry}(a) for an example large-scale ant mesh. Additionally, Figure \ref{fig:geometry}(b) shows that dQP scales more favorably with problem size compared to OptNet, QPLayer and SCQPTH; in particular, only dQP supports problems with over $10^4$ vertices.

\begin{figure}[h]
\centering
\includegraphics[width=\textwidth]{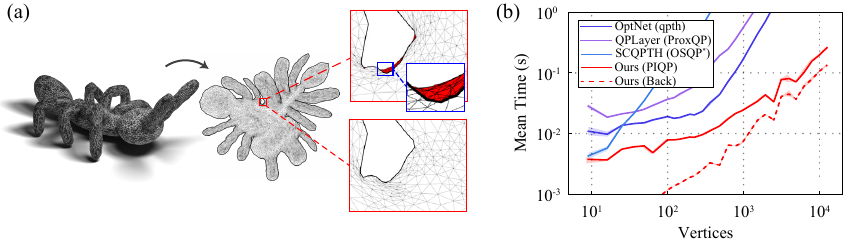}
\vskip -.1in
\caption{(a) Planar embedding of a large-scale ant mesh (15k vertices). Zoom-ins show: (top) a non-injective harmonic map with edge overlaps highlighted in red; (bottom) dQP’s injective solution to the bi-level problem \eqref{eqn:geo}.
(b) Scalability: solver runtimes as mesh size increases for a synthetic problem.
\label{fig:geometry}}
\end{figure}

\section{Conclusion}
\label{sect:conclusions}

We introduce the dQP framework for differentiating QPs by leveraging the local equivalence between a QP and a simpler equality-constrained problem. dQP provides a straightforward differentiable interface for any QP solver, yielding an efficient QP layer that can be readily integrated into neural architectures, among other applications. We note that our current method does not yet support full parallelization or GPU acceleration, as state-of-the-art sparse and scalable QP solvers with GPU support are still lacking. We recognize these as important challenges and limitations to address in future work. We see the present work as an important step toward developing similar solver-agnostic differentiable layers for other popular optimization problems (\eg, semidefinite programming), which we plan to explore in future research.

\section*{Acknowledgments}

The authors gratefully acknowledge partial support from NSF grant DMS-2152289, FRG: Collaborative Research: Mathematical and Statistical Analysis of Compressible Data on Compressive Networks; the NSERC Discovery Grant RGPIN-2024-04605, Practical Neural Geometry Processing; and the FRQNT Établissement de la relève professorale grant 365040, Calcul rapide et léger des déformations à l'aide de réseaux neuronaux.

\bibliography{sections/bibliography} 
\bibliographystyle{styles/icml2025}

\appendix
\newpage

\section{Existing Methods for Differentiable Optimization}
\label{app:methods}

Table \ref{tab:diff_methods} provides an overview of existing differentiable optimization methods for quadratic programming (QP), more general conic programs (CP) and other optimization problems.

\begin{table}[htbp]
\caption{Comparison of differentiable optimization libraries and layers.}
\centering
\renewcommand{\arraystretch}{2}
\rowcolors{2}{white}{gray!5}
\begin{adjustbox}{max width=\textwidth}
\begin{tabular}{l l l p{5.5cm} p{5.5cm}}
\toprule
\textbf{Name} & \textbf{Programs} & \textbf{Solvers} & \textbf{Features} & \textbf{Limitations} \\
\midrule
OptNet \citep{amos2017optnet} & QP & qpth & GPU batchable. Re-uses factorization from forward for fast backward solve. & Solver specific. Sparsity not supported.  \\
QPLayer \citep{bambade2024qplayer} & QP & ProxQP & Supports infeasible QPs. Tightly couples to ProxQP. & Solver specific. Sparsity not supported. \\
SCQPTH \citep{butler2023scqpth} & QP & OSQP & Differentiates ADMM updates. Supports infeasibility detection and automatic scaling. & Solver specific. Sparsity not supported. \\
Diffcp \citep{agrawal2019diffcp} & CP & SCS, ECOS, Clarabel & 
Supports sparse cone programs (CP). & Supports specific conic solvers. \\
CVXPYLayers \citep{cvxpylayers2019} & CP & diffcp & Flexible problem formulation and solver choice. Supports sparsity. & Inherits diffcp limitations. \\
BPQP \citep{pan2024bpqp} & QP,SOCP & OSQP & Computes derivatives by solving a second equality-constrained QP. & Published implementation is incomplete. \\
Alt-Diff \citep{sun2022alternating} & QP & Custom ADMM & Differentiates ADMM updates. Supports inexact solutions. & Reportedly slow \citep{bambade2024qplayer}. \\
LQP \citep{butler2023efficient} & QP & Custom ADMM & Differentiates ADMM updates. & Box constraints only. \\
LPGD \citep{LPGD} & CP & diffcp & Modifies diffcp to handle degenerate derivatives. & Inherits diffcp limitations. \\
CVXPYgen \citep{schaller2025codegenerationsolvingdifferentiating}  & QP, SOCP & OSQP & Accelarates CVXPYlayers by C compilation of problem formulation. & Solver specific. \\
JAXopt-general \citep{blondel2022efficient} & User supplied & User supplied & Differentiates arbitrary implicit functions. & User must manually provide KKT.. \\
JAXopt-OSQP  \citep{blondel2022efficient} & QP & Custom GPU OSQP & GPU batchable. & Solver specific. \\
JAXopt-CVXPYQP  \citep{blondel2022efficient} & QP & CVXPY & Flexible problem formulation and solver choice. & Inherits CVXPY limitations. Sparsity not supported. \\
SDPRLayers \citep{Holmes2024SDPCertifiableGlobalDerivative} & CP & CVXPY & Flexible problem formulation and solver choice. & Inherits CVXPY limitations. \\
Torchopt \citep{torchopt} & User supplied & User supplied & Differentiates arbitrary implicit functions. & User must manually provide KKT. \\
Optax \citep{optaxdeepmind2020jax} & Specialized & Specialized & & Supports specialized problems/solvers, \eg, projections. \\
DiffOpt.jl \citep{besanccon2024flexible} & Various & JuMP-supported solvers \cite{DunningHuchetteLubin2017} & Supports a variety of convex and non-convex problems. & JuMP provides QP support only for COSMO, OSQP, and Ipopt \\
Theseus \citep{Theseus} & NLS & CHOLMOD, cudaLU, BaSpaCho
 & Supports non-linear least squares (NLS). GPU batchable. Supports sparsity. & Lacks hard constraints. \\ 
OptNet-Sparse \citep{qpth_spbatch} & QP & qpth & & Published implementation is incomplete. \\
OptNet-CVXPY \citep{qpth_CVXPY} & QP & CVXPY & & Published implementation is incomplete. \\
OSQPTh \citep{osqpth} & QP & OSQP & & Published implementation is incomplete. \\
DFWLayer \citep{liu2024dfwlayer} & QP & Frank-Wolfe & Automatically differentiates an unrolled Frank-Wolfe solver. & Solver specific. \\ 
CombOptNet \citep{Paulus2021IntegerProgramming} & ILP & Gurobi & Differentiation of combinatorial solvers & \\
Blackbox-Backprop \citep{Pogančić2020BlackboxCombDiff} & MIP & Gurobi MIP, Blossom V, Dijkstra & Differentiation of combinatorial solvers & \\
\bottomrule
\end{tabular}
\end{adjustbox}
\label{tab:diff_methods}
\end{table}

\textbf{Remarks.} This table includes all relevant differentiable optimization methods known to us at the time of writing. 
In some cases, code was available online without an associated publication. 
Among all listed methods that directly support generic QP without modification, only dQP, QPLayer, SDPRLayers, and DiffOpt.jl return both the dual optimal point of a QP and its derivatives. Notably, dQP is the sole QP method that supports a wide variety of solvers with minimal-overhead through \textit{qpsolvers}, in contrast to a significant number of methods that build on top of CVXPY/CVXPYLayers, which requires a compilation step (\eg, SDPRLayers, JAXopt-CVXPYQP, OptNet-CVXPY) and/or diffcp which has limited solvers compared to ordinary, non-differentiable CVXPY (\eg, LPGD). Several methods are available only outside the PyTorch framework (\eg, JAXopt, Optax, DiffOpt.jl).

\newpage

\section{Proof of Theorem 1}
\label{app:proof}

In this section we provide a proof of Theorem 1, which we restate below:
\begin{theorem}
The QP \eqref{eqn:rQP} is locally equivalent to the reduced equality-constrained QP \eqref{eqn:QP} and its solution $\zeta^*(\theta) = (z^*(\theta),\lambda^*(\theta),\mu^*(\theta))$ admits the explicit form
    \begin{equation}
    \begin{bmatrix}
    z^* \\  \lambda^* \\ \mu_J^*
    \end{bmatrix} = 
    \begin{bmatrix}
    P &  A^T & C_J^T \\
    A & 0 & 0 \\
    C_J & 0 & 0 \\
    \end{bmatrix}^{-1} 
    \begin{bmatrix}
    -q \\ b \\ d_J
    \end{bmatrix}.
    \end{equation}
    Furthermore, the optimal point can be explicitly differentiated to obtain
    \begin{align}
    \begin{bmatrix}
    \diff_\theta z^* \\  \diff_\theta \lambda^* \\ \diff_\theta \mu_J^*
    \end{bmatrix} = -
    \begin{bmatrix}
    P &  A^T & C_J^T \\
    A & 0 & 0 \\
    C_J & 0 & 0 \\
    \end{bmatrix}^{-1}
    \Biggl( 
    \begin{bmatrix}
    \diff_\theta P &  \diff_\theta A^T & \diff_\theta C_J^T \\
    \diff_\theta A & 0 & 0 \\
    \diff_\theta C_J & 0 & 0 \\
    \end{bmatrix}
    \begin{bmatrix}
    z^* \\  \lambda^* \\ \mu_J^*
    \end{bmatrix}
    - 
    \begin{bmatrix}
    -\diff_\theta q \\ \diff_\theta b \\ \diff_\theta d_J
    \end{bmatrix}
    \Biggr). 
\end{align}
\end{theorem}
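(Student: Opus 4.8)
The plan is to establish the result in three stages: (i) local equivalence of the reduced equality-constrained QP \eqref{eqn:rQP} to the original QP \eqref{eqn:QP}; (ii) the closed-form solution \eqref{eqn:explicit_soln}; (iii) the derivative formula \eqref{eqn:explicit_diff} by ordinary differentiation of (ii). For (i), the key is the Basic Sensitivity Theorem of \citet{Fiacco1976}: under strict convexity, feasibility, and strict complementary slackness, the active set $J(\theta)$ is locally constant, and the primal-dual point $\zeta^*(\theta)$ is the unique KKT point, depending smoothly on $\theta$ in a neighborhood. I would argue that on this neighborhood the KKT system \eqref{eqn:KKT} for \eqref{eqn:QP} and the KKT system for \eqref{eqn:rQP} have the same solution: any solution of \eqref{eqn:rQP}'s KKT conditions, with the inactive $\mu_j^*$ set to zero, satisfies \eqref{eqn:KKT} (the inactive inequalities hold strictly by continuity, complementary slackness holds trivially, and $\mu_J^*\geq 0$ is inherited from strict complementarity), and conversely \eqref{eqn:QP}'s solution restricted to the active rows solves \eqref{eqn:rQP}. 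Since both problems are strictly convex with unique solutions, the primal points $z^*(\theta)$ coincide locally.

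For (ii), I would write the KKT conditions of the equality-constrained QP \eqref{eqn:rQP}. With $P\succ 0$ and the stacked equality matrix $\begin{bsmallmatrix} A \\ C_J\end{bsmallmatrix}$ of full row rank (a consequence of the linear independence constraint qualification implicit in the sensitivity hypotheses), the stationarity and feasibility equations are exactly the block linear system with matrix $K_J = \begin{bsmallmatrix} P & A^T & C_J^T \\ A & 0 & 0 \\ C_J & 0 & 0\end{bsmallmatrix}$ and right-hand side $(-q,\, b,\, d_J)$. I would note that $K_J$ is invertible: this is the standard fact that a saddle-point matrix with positive definite $(1,1)$-block and full-row-rank off-diagonal block is nonsingular (equivalently, the reduced Hessian on the constraint nullspace is positive definite). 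Solving gives \eqref{eqn:explicit_soln}, and by the equivalence in (i) this $(z^*,\lambda^*,\mu_J^*)$ is also the primal-dual solution of the original QP (with $\mu_j^*=0$ off $J$).

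For (iii), since $K_J(\theta)$ and the right-hand side depend smoothly on $\theta$ and $K_J(\theta)$ stays invertible on the neighborhood, I would differentiate the identity $K_J(\theta)\,\xi^*(\theta) = r(\theta)$ where $\xi^* = (z^*,\lambda^*,\mu_J^*)$ and $r = (-q,b,d_J)$, using the product rule: $\diffth K_J \cdot \xi^* + K_J \cdot \diffth \xi^* = \diffth r$, hence $\diffth \xi^* = K_J^{-1}(\diffth r - \diffth K_J \cdot \xi^*)$, which is precisely \eqref{eqn:explicit_diff} once one observes $\diffth K_J$ has the stated block form (the zero blocks are constant). The optional differential-matrix-calculus derivation would just re-derive the $\diffth K_J\cdot\xi^*$ term componentwise, e.g. $\diffth(Pz^*) = \diffth P\, z^* + P\,\diffth z^*$ etc., matching \eqref{eqn:active_derivative}.

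The main obstacle is stage (i): carefully justifying that the local equivalence holds as an \emph{identity of solution maps} (not merely at the base point $\theta$), which requires invoking strict complementary slackness to keep $J$ fixed and using continuity to preserve strict feasibility of the inactive constraints and sign of $\mu_J^*$ on a full neighborhood. Stages (ii)–(iii) are then essentially linear algebra and the chain rule, modulo citing the nonsingularity of the saddle-point matrix $K_J$.
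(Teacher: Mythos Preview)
Your proposal is correct and follows essentially the same route as the paper's proof: both invoke the Basic Sensitivity Theorem to fix the active set locally, reduce the KKT system of \eqref{eqn:QP} to the linear system $K_J\zeta_J^* = v_J$ characterizing \eqref{eqn:rQP}, assert invertibility of $K_J$, and then explicitly differentiate. The only cosmetic difference is that the paper differentiates $\zeta_J^* = K_J^{-1} v_J$ via the matrix-inverse derivative formula, whereas you apply the product rule to $K_J\zeta_J^* = v_J$; these are of course equivalent.
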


\begin{proof}
We begin by establishing that the QP \eqref{eqn:QP} and the equality-constrained reduced QP \eqref{eqn:rQP} are equivalent. For any $\theta$ satisfying the assumptions of the theorem, the QP \eqref{eqn:QP} has a unique solution characterized by the KKT system
\begin{equation}
\begin{aligned}
P(\theta)z^*(\theta) + q(\theta) + A(\theta)^T \lambda^*(\theta) + C(\theta)^T \mu^*(\theta) &= 0 \\
A(\theta)z^*(\theta) - b(\theta) &= 0 \\
C(\theta)z^*(\theta) - d(\theta) &\leq 0 \\
\mu^*(\theta) &\geq 0 \\
D( \mu^*(\theta) ) (C(\theta)z^*(\theta) - d(\theta)) &= 0. 
\end{aligned} \label{eqn:apdx_kkt}
\end{equation}
Complementarity implies that active constraints $j\in J(\theta)$ have $\mu^*(\theta)_j > 0$ and therefore must be satisfied with an equality $\parr{C(\theta)z^*(\theta) - d(\theta)}_j = 0$, while inactive constraints $j\notin J(\theta)$ have $\mu^*(\theta)_j=0$ and thus can be eliminated, without altering the solution. Therefore, the unique solution $\zeta^*(\theta) = (z^*(\theta),\lambda^*(\theta),\mu^*(\theta))$ of \eqref{eqn:apdx_kkt} is also the unique solution of the reduced system
\begin{equation}
\begin{aligned}
P(\theta)z^*(\theta) + q(\theta) + A(\theta)^T \lambda^*(\theta) + C(\theta)_{J(\theta)}^T \mu^*(\theta)_{J(\theta)}  &= 0 \\
A(\theta)z^*(\theta) - b(\theta) &= 0 \\
C(\theta)_{J(\theta)}z^*(\theta) - d(\theta)_{J(\theta)} &= 0, \\
\end{aligned} \label{eqn:apdx_rKKT}
\end{equation}
which are exactly the KKT conditions of the equality-constrained reduced QP \eqref{eqn:rQP}. Uniqueness of solution then implies that \eqref{eqn:QP} and \eqref{eqn:rQP} are pointwise equivalent at $\theta$. Beyond this, since $P,q,A,b,C,d$ are smoothly parameterized by $\theta$, the Basic Sensitivity Theorem \citep{Fiacco1976} asserts that the primal-dual solution $\zeta^*(\theta)$ for \eqref{eqn:QP} is a differentiable function of $\theta$ in a neighborhood of $\theta$, defined implicitly through the KKT's equality conditions, and that the active set $J(\theta)$ is fixed in this neighborhood. Thus, \eqref{eqn:QP} and \eqref{eqn:rQP} are locally equivalent in a neighborhood of the parameter $\theta$.

\eqref{eqn:apdx_rKKT} implies that the reduced primal-dual solution $\zeta_J^*(\theta) = (z^*(\theta),\lambda^*(\theta),\mu^*_J(\theta))$ satisfies $K_J(\theta) \zeta_J^*(\theta) = v_J(\theta)$, where
\begin{equation}
K_J(\theta) = \begin{bmatrix}
P(\theta) &  A(\theta)^T & C(\theta)^T_{J (\theta)} \\
A(\theta) & 0 & 0 \\
C(\theta)_{J (\theta)} & 0 & 0 \\
\end{bmatrix},
\quad
v_J(\theta) = \begin{bmatrix}
-q(\theta) \\ b(\theta) \\ d(\theta)_{J (\theta)}
\end{bmatrix}.
\end{equation}

Under the assumptions of the theorem, the reduced KKT matrix $K_J(\theta)$ is invertible and 
\begin{equation}
\zeta_J^* = K_J^{-1} v_J,
\label{eqn:closed_form_neighborhood}
\end{equation}
yielding \eqref{eqn:explicit_soln}. Moreover, since $J(\theta)$ is constant in a local neighborhood, the Basic Sensitivity Theorem establishes that $\zeta_J^*(\theta)$ is differentiable. Using the derivative of the matrix inverse \citep{Magnus1988MatrixDC,petersen2008matrix}, we \textit{explicitly} differentiate \eqref{eqn:closed_form_neighborhood} to obtain
\begin{equation}
    \diffth \zeta_J^* = (-K_J^{-1}(\diffth K) K_J^{-1}) v_J + K_J^{-1}(\diffth v_J) = -K_J^{-1}(\diffth K_J) \zeta_J^* + K_J^{-1}(\diffth v_J),
    \label{eqn:explicit_diff_derivation}
\end{equation}
yielding \eqref{eqn:explicit_diff}.

\end{proof} 

\newpage

\section{Backpropagation} 
\label{app:backprop}

Like other differentiable QP layers implemented within automatic differentiation frameworks such as PyTorch \citep{pytorch}, we do not directly return the derivative $\diffth \zeta^*$. This is in part because dQP directly receives the QP parameters $P,q,A,b,C,d$ and not $\theta$, and so in backpropogation we are not concerned with $\theta$. This is rather accounted for in the next step outside dQP, usually by automatic differentiation. Secondly, backpropagation requires that we compute a so-called Jacobian-vector product which are products of the Jacobians with an ``incoming'' gradient of a quantity or loss $\ell$ that depends on $\zeta^*$. This requires less computation and does not require the formation of a 3-tensor. Since $\zeta_J^* = K_J^{-1} v_J$ is a formal matrix-vector multiplication, the Jacobian-vector product is well-known,
\begin{equation}
    \nabla_{v_J} \ell = (K_J^{-1})^T \, \, \nabla_{\zeta_J^*} \ell,
    \label{eqn:backpropv}
\end{equation}
and
\begin{equation}
    \nabla_{K_J} \ell = -\nabla_{v_J} \ell \, \, {\zeta_J^*}^T,
\end{equation}
with respect to $K_J, v_J$, respectively. Although backpropagation introduces a transposition, the re-use of a factorization from solving for the active duals is unaffected. This follows from the symmetry of the reduced KKT matrix which simplifies \eqref{eqn:backpropv} into $\nabla_{v_J} \ell = K_J^{-1} \, \, \nabla_{\zeta_J^*} \ell$. Next, we extract the gradients with respect to the parameters by the chain rule. This amounts to tracking their position in the blocks and accounting for symmetry constraints. It is helpful to write  $(d_z, d_\lambda, d_{\mu_J}) = -\nabla_{v_J} \ell$ so that we express
\begin{align}
\nabla_P \ell & = \frac{1}{2}\left(d_z \, z^{*T} +  z^{*} \, d_z^T  \right) &\quad&  \nabla_q \ell = d_z \nonumber \\
\nabla_A \ell & = d_\lambda z^{*T} + \lambda^* d_z^T  &\quad&  \nabla_b \ell = -d_\lambda \label{eqn:backprop_eqs} \\
\nabla_{C_J} \ell & = d_{\mu_J} z^{*T} + \mu_J^* d_z^T &\quad& \nabla_{d_J} \ell = -d_{\mu_J} \nonumber ,
\end{align}
similar to OptNet \citep{amos2017optnet}. We note that the gradient with respect to $P$ is constrained to lie within the subspace of symmetric matrices. Similarly, if the matrices $P,A,C$ are sparse, then we project the gradient to lie within the non-zero entries, which can be implemented efficiently in Equations \ref{eqn:backprop_eqs}. Although the above argument is for a scalar loss $\ell$, the same approach is naturally adapted if $\zeta^*$ is mapped to a vector in the immediate next layer.

\section{Implementation Details} 
\label{app:implementation}

\textbf{Choosing a solver.} Since our work enables users to choose any QP solver as the front-end for their differentiable QP applications, we include a simple diagnostic tool for quantitatively measuring solver performance. We present an example result in Figure \ref{fig:choose_best_solver} for the cross geometry experiment in Figure \ref{fig:mapping_cross}, finding PIQP, OSQP, and QPALM to be the most efficient. For this reason, we choose PIQP in the geometry experiments. We also include tools for checking the solution and gradient accuracy.
\begin{figure}[h!]
\begin{center}
\includegraphics[width=.8\textwidth]{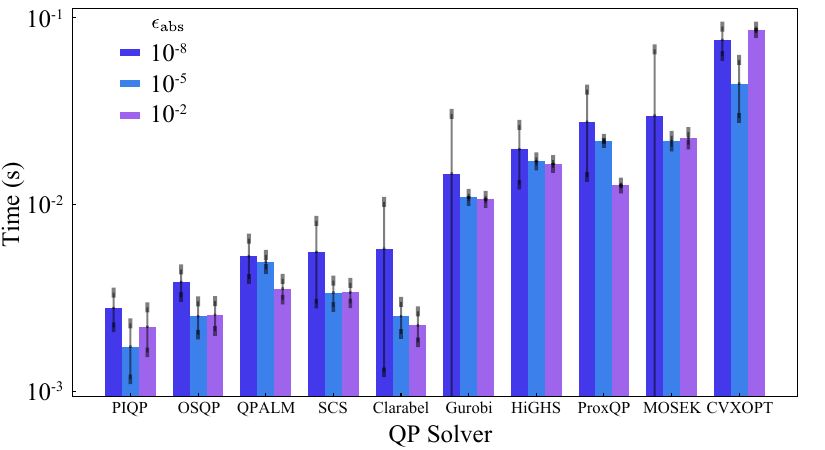}
\end{center}
\caption{Evaluating the best QP solver for the cross geometry problem (Figure \ref{fig:mapping_cross}) using our diagnostic tool. The solution tolerance regimes are varied between $\epsilon_{\textrm{abs}} = 10^{-8},10^{-5},10^{-2}$. 
}
\label{fig:choose_best_solver}
\end{figure}

\textbf{Tolerances.} In addition to the active set tolerance $\epsilon_J$, QP solvers often support additional user-provided tolerances. These include the primal residual which measures violations of feasibility, the dual residual which measures violations of stationary, and for some solvers also the duality gap, which provides a direct handle on solution accuracy. We inherit the structure of \textit{qpsolvers} for setting custom tolerances on different QP solvers, though we set a heuristic default which is sufficient for many of the experiments in this work.

\textbf{Convexity and Feasibility.} Two key assumptions of our method are strict convexity and feasibility. However, these are often violated in practice. We include optional checks that $P$ is symmetric positive definite. On the other hand, we do not perform any special handling for infeasibility -- a limitation of our method compared to, for example, QPLayer \citep{bambade2024qplayer}.

\textbf{Non-differentiable Points.} For non-differentiable problems, we solve for the derivatives in the least-squares sense, plugging the system into \textit{qpsolvers} which can handle least-squares, or a standard least-squares solver. We attempt to anticipate weakly active constraints which cause non-differentiability by measuring the norms of the primal residual and the dual. Additionally, the reduced KKT is non-invertible if the active dual solution is not unique and so we check a necessary condition: the total number of active constraints plus the number of equalities must be less than the dimension. If these checks are passed, we attempt the standard linear solve and pass to least-squares if it fails. 

\textbf{Normalization.} Some problems have large variations in scale between different rows within the constraints. This influences the primal residual and thus the active set, which is determined by comparing with an absolute threshold tolerance. To address this issue for these problems, we include an \emph{optional} differentiable normalization step on the constraints before Algorithm \ref{alg:qp_diff} is carried out. Under this choice, the resulting relative primal residual becomes the scale-invariant distance to the constraint.

\textbf{Equality Constraints.} While we include equality constraints in our general formulation, they are not required.

\textbf{Warm-Start.} Since \textit{qpsolvers} supports warm-starting, we inherit it as an option and store data in the PyTorch module from previous outer iterations, which  can be used as initialization. This is useful for bi-level optimization problems where the input $\theta$ changes little between outer iterations. We did not use this feature in our bi-level optimization experiment.

\textbf{Fixed Parameters.} For fixed parameters, we do not compute the corresponding derivative. This saves the cost of unwrapping the linear solve as in \eqref{eqn:backprop_eqs} and saves the memory to form the loss gradients, which are matrices for $P,A,C$.

\textbf{Active Set Refinement.} 
\label{app:mpc_cond} Inaccuracy in a solution may lead to instability in the active set near weakly active constraints, degrading the gradient quality. To show this, we repeat the experiment in Figure \ref{fig:active_set_space} which has a simple polyhedral active set parameter space. One setup where instability appears is illustrated in Figure \ref{fig:active_set_refinement} where we use absolute solver tolerance $\epsilon_{\textrm{abs}} = 10^{-4}$ and a much tighter active tolerance $\epsilon_J= 10^{-7}$. Qualitatively, the active set at each solution is severely degraded, even for points away from the boundaries where the set changes. 
\begin{figure}[h]
\begin{center}
\includegraphics[width=0.6\textwidth]{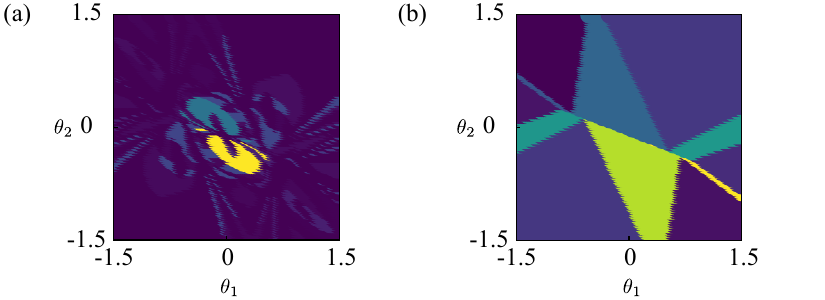}
\end{center}
\caption{The set-up in figure \ref{fig:active_set_space} with looser solver tolerance $\epsilon_{\textrm{abs}}= 10^{-4}$, active tolerance $\epsilon_J = 10^{-7}$, and solver PIQP. (a) The computed active set is degraded due to the inaccurate solution. (b) Our heuristic active set refinement algorithm recovers the ground truth active sets.}
\label{fig:active_set_refinement}
\end{figure}
We provide a \emph{optional} heuristic algorithm to address this, which recovers the desired set in this problem. First, we order the constraints by increasing residual and select an initial active set from the  tolerance $\epsilon_J$. Then, we progressively add constraints by checking if the residual of the system \ref{eqn:explicit_soln} for $\zeta_J^*$ decreases, and greedily accepting until adding constraints no longer improves the residual. At each step, we keep the primal solution from the forward fixed, and solve for the new active dual variables. While this algorithm works well on simple examples, more sophisticated and efficient techniques may be desired for harder problems. We did not use this refinement algorithm in any of our experiments.

\textbf{QP Solvers.} Throughout this work, we use a number of QP solvers available in \textit{qpsolvers} including Clarabel \citep{Clarabel_2024}, 
DAQP \citep{arnstrom2022dual}, 
Gurobi \citep{gurobi},
HiGHS \citep{Huangfu2018},
HPIPM \citep{Frison2020hpipm},
MOSEK \citep{andersen2000mosek},
OSQP \citep{osqp},
PIQP \citep{schwan2023piqp},
ProxQP \citep{bambade2023proxqp},
QPALM \citep{hermans2019qpalm},
qpSWIFT \citep{pandala2019qpswift},
quadprog \citep{Goldfarb1983}, and
SCS \citep{ODonoghue2016Conic}.

\section{Experimental Details}
\label{app:experiments}

For completeness and reproducibility, we include additional details on the experiments. We run all experiments and methods on CPU, including methods that support GPU such as OptNet. 

\subsection{Performance Evaluation}
\label{sec:app_benchmark}

All experiments in this section were run on a Macbook Air with Apple M2 chips, 8 cores,
and 16GB RAM.

In our QP benchmark experiments, we evaluate the solution accuracy using the primal residual $r_p$ (the maximum error on equality and inequality constraints), dual residual $r_d$ (the maximum error on the dual feasibility condition), and duality gap $r_{g}$ (the difference between primal and dual optimal values).
\begin{align*}
    r_p &=\max \left(\left\|A z-b\right\|_{\infty},\left[C z-d\right]_{+}\right)\\
    r_d &=\left\|P z+q+A^T \lambda+C^T \mu\right\|_{\infty}\\
    r_{g} &= |z^{T} P z+q^T z+b^T \lambda+d^T \mu|
\end{align*}
Throughout our experiments, we present results for the duality gap to indicate the solution accuracy since, for a strongly convex QP, a zero duality gap $r_{g}=0$ is a necessary and sufficient condition for optimality. 

For the forward, we set the absolute residual tolerance to $ \epsilon_{\textrm{abs}} = 10^{-6}$ and the active constraint tolerance to $\epsilon_{J}=10^{-5}$. We run each problem separately with batch size 1. 

In our benchmark, we regard a problem as successfully solved if it meets the following criteria: 
\begin{enumerate}
    \item The solve time is less than a practical 800s time limit. 
    \item The primal residual, dual residual, and duality gap are less than 1.0. This is a coarse check, less stringent than the imposed tolerances.
    \item The differentiation is executed, and does not lead to a fatal error (e.g. due to non-invertibility of a linear system).
\end{enumerate}

Experimental results are averaged over 5 independent samples. 

Since SCQPTH does not support equality constraints, we convert them into a corresponding set of inequality constraints.

\subsubsection{Projection onto the probability simplex}
\label{app:proj_simplex}

\begin{table}[ht]
\centering
\caption{Time and accuracy performance statistics for projection onto the probability simplex.}
\begin{adjustbox}{max width=\textwidth}
\begin{tabular}{llccccccc}
\toprule
\multirow{2}{*}{Solver} & \multirow{2}{*}{Metric} & \multicolumn{7}{c}{Problem Size} \\
\cmidrule{3-9}
 &  & 20 & 100 & 450 & 1000 & 4600 & 10000 & 100000 \\
\midrule
\multirow{4}{*}{dQP (Gurobi)} 
 & Accuracy & $\mathbf{1.07 \times 10^{-9}}$ & $\mathbf{8.88 \times 10^{-10}}$ & $\mathbf{2.26 \times 10^{-9}}$ & $\mathbf{1.47 \times 10^{-9}}$ & $\mathbf{2.72 \times 10^{-9}}$ & $\mathbf{9.55 \times 10^{-10}}$ & $\mathbf{6.67 \times 10^{-10}}$ \\
 & Forward [ms] & 1.38 & 1.65 & $\mathbf{2.66}$ & $\mathbf{4.37}$ & $\mathbf{15.83}$ & $\mathbf{42.21}$ & $\mathbf{423.91}$ \\
 & Backward [ms] & $0.24$ & $\mathbf{0.28}$ & $\mathbf{0.46}$ & $\mathbf{0.69}$ & $\mathbf{2.58}$ & $\mathbf{6.21}$ & $\mathbf{53.45}$ \\
 & Total [ms] & 1.63 & 1.92 & $\mathbf{3.13}$ & $\mathbf{5.06}$ & $\mathbf{18.46}$ & $\mathbf{49.00}$ & $\mathbf{476.64}$ \\
\midrule
\multirow{4}{*}{OptNet (qpth)} 
 & Accuracy & $4.04 \times 10^{-8}$ & $4.24 \times 10^{-8}$ & $1.64 \times 10^{-8}$ & $2.67 \times 10^{-8}$ & $3.95 \times 10^{-8}$ & $6.08 \times 10^{-8}$ & Failed \\
 & Forward [ms] & 2.72 & 4.72 & 33.46 & 165.50 & 7788.73 & 65976.45 & -- \\
 & Backward [ms] & 0.20 & 0.46 & 3.99 & 17.48 & 720.43 & 4958.74 & -- \\
 & Total [ms] & 2.92 & 5.19 & 37.66 & 182.99 & 8514.65 & 70856.43 & -- \\
\midrule
\multirow{4}{*}{QPLayer (ProxQP)} 
 & Accuracy & $9.53 \times 10^{-6}$ & $3.65 \times 10^{-5}$ & $4.16 \times 10^{-4}$ & $2.19 \times 10^{-4}$ & $1.16 \times 10^{-3}$ & $1.94 \times 10^{-3}$ & Failed \\
 & Forward [ms] & $\mathbf{0.14}$ & ${1.23}$ & 66.73 & 657.88 & 71724.25 & 869532.53 & -- \\
 & Backward [ms] & ${0.14}$ & 0.37 & 10.85 & 91.72 & 7594.93 & 77831.58 & -- \\
 & Total [ms] & $\mathbf{0.29}$ & $\mathbf{1.61}$ & 77.56 & 751.14 & 79314.49 & 946174.68 & -- \\
\midrule
\multirow{4}{*}{BPQP (OSQP)} 
 & Accuracy & $2.04 \times 10^{-7}$ & $5.64 \times 10^{-5}$ & $9.66 \times 10^{-4}$ & $3.04 \times 10^{-3}$ & $1.76 \times 10^{-2}$ & $4.02 \times 10^{-2}$ & Failed \\
 & Forward [ms] & $\mathbf{0.11}$ & $\mathbf{0.40}$ & 3.88 & 14.45 & 226.27 & 712.20 & -- \\
 & Backward [ms] & 0.21 & 0.34 & 1.90 & 7.10 & 132.73 & 692.37 & -- \\
 & Total [ms] & 0.32 & 0.75 & 5.81 & 21.68 & 358.78 & 1407.05 & -- \\
\midrule
\multirow{4}{*}{CVXPYLayers (SCS)} 
 & Accuracy & $1.31 \times 10^{-6}$ & $9.47 \times 10^{-5}$ & $1.31 \times 10^{-3}$ & $2.79 \times 10^{-3}$ & Failed & Failed & Failed \\
 & Forward [ms] & $1.30$ & $10.51$ & 131.60 & 537.26 & -- & -- & -- \\
 & Backward [ms] & $0.66$ & 2.92 & 23.83 & 123.25 & -- & -- & -- \\
 & Total [ms] & $1.97$ & $13.42$ & 156.57 & 662.60 & -- & -- & -- \\
\midrule
\multirow{4}{*}{SCQPTH (OSQP$^*$)} 
 & Accuracy & $6.19 \times 10^{-7}$ & Failed & Failed & Failed & Failed & Failed & Failed \\
 & Forward [ms] & 14.35  & -- & -- & -- & -- & -- & -- \\
 & Backward [ms] & 0.40 & -- & -- & -- & -- & -- & -- \\
 & Total [ms] & 14.75 & -- & -- & -- & -- & -- & -- \\
\bottomrule
\end{tabular}
\end{adjustbox}
\label{tab:performance_metrics_simplex}
\end{table}

For projection onto the probability simplex, as formulated in $P_{1}$, equation \eqref{eqn:proj_simplex}, we set $x \in \mathbb{R}^{n}$ with $x_{i} \sim \mathcal{N}(0,1)$ drawn randomly from a standard normal distribution. The dataset, with 500 problems, has dimensions $n \in$ \{10, 20, 50, 100, 220, 450, 1000, 2100, 4600, 10000, 100000\}. For $n\leq4600$, each dimension contains 50 problems and 25 problems for $n>4600$. Gurobi is chosen as dQP's forward solver. Figure \ref{fig:structure} shows the median performance within the $1/4$ and $3/4$ quantiles for each dimension.  SCQPTH failed for all problems with $n>50$. The statistics in Table~\ref{tab:performance_metrics_simplex} show that dQP outperforms competing methods for differentiable QP in both forward and backward times.

\subsubsection{Projection onto chain}
\label{app:chain}

\begin{table}[ht]
\centering
\caption{Time and accuracy performance statistics for projection onto chains.}
\begin{adjustbox}{max width=\textwidth}
\begin{tabular}{llccccccc}
\toprule
\multirow{2}{*}{Solver} & \multirow{2}{*}{Metric} & \multicolumn{7}{c}{Problem Size} \\
\cmidrule{3-9}
 &  & 200 & 500 & 1000 & 2000 & 4000 & 10000 & 100000 \\
\midrule
\multirow{4}{*}{dQP (Gurobi)} 
 & Accuracy & $2.73 \times 10^{-7}$ & $2.02 \times 10^{-6}$ & $3.79 \times 10^{-6}$ & $9.16 \times 10^{-6}$ & $2.64 \times 10^{-5}$ & $4.29 \times 10^{-5}$ & $\mathbf{2.81 \times 10^{-4}}$ \\
 & Forward [ms] & $\mathbf{5.66}$ & $\mathbf{12.04}$ & $\mathbf{24.41}$ & $\mathbf{44.79}$ & $\mathbf{82.57}$ & $\mathbf{209.79}$ & $\mathbf{2263.54}$ \\
 & Backward [ms] & $\mathbf{0.49}$ & $\mathbf{0.98}$ & $\mathbf{1.74}$ & $\mathbf{3.18}$ & $\mathbf{5.81}$ & $\mathbf{14.69}$ & $\mathbf{172.80}$ \\
 & Total [ms] & $\mathbf{6.15}$ & $\mathbf{12.99}$ & $\mathbf{26.19}$ & $\mathbf{47.94}$ & $\mathbf{88.35}$ & $\mathbf{224.89}$ & $\mathbf{2432.64}$ \\
\midrule
\multirow{4}{*}{OptNet (qpth)} 
 & Accuracy & $\mathbf{6.97 \times 10^{-8}}$ & $\mathbf{1.75 \times 10^{-7}}$ & $\mathbf{9.22 \times 10^{-8}}$ & $\mathbf{2.43 \times 10^{-7}}$ & $\mathbf{2.60 \times 10^{-7}}$ & $\mathbf{1.98 \times 10^{-7}}$ & Failed \\
 & Forward [ms] & $23.37$ & $156.49$ & $845.24$ & $5124.87$ & $32528.54$ & $536702.00$ & -- \\
 & Backward [ms] & $1.98$ & $13.06$ & $61.41$ & $365.02$ & $2266.20$ & $35438.33$ & -- \\
 & Total [ms] & $25.38$ & $169.64$ & $907.25$ & $5491.56$ & $34799.98$ & $571710.06$ & -- \\
\midrule
\multirow{4}{*}{QPLayer (ProxQP)} 
 & Accuracy & $8.46 \times 10^{-5}$ & $8.78 \times 10^{-5}$ & $1.82 \times 10^{-4}$ & $2.97 \times 10^{-4}$ & $6.95 \times 10^{-4}$ & $1.03 \times 10^{-3}$ & Failed \\
 & Forward [ms] & $6.60$ & $69.90$ & $505.05$ & $3484.47$ & $26921.57$ & $414295.22$ & -- \\
 & Backward [ms] & $1.44$ & $12.10$ & $72.13$ & $512.95$ & $3833.25$ & $57219.68$ & -- \\
 & Total [ms] & $8.04$ & $81.93$ & $577.11$ & $3996.92$ & $30748.67$ & $471649.91$ & -- \\
\midrule
\multirow{4}{*}{SCQPTH (OSQP$^*$)} 
 & Accuracy & $1.67 \times 10^{-5}$ & $2.83 \times 10^{-5}$ & $4.76 \times 10^{-5}$ & $6.64 \times 10^{-5}$ & $7.80 \times 10^{-5}$ & $1.21 \times 10^{-4}$ & Failed \\
 & Forward [ms] & $10.02$ & $39.49$ & $236.61$ & $1617.88$ & $8258.89$ & $65507.05$ & -- \\
 & Backward [ms] & $3.17$ & $28.46$ & $170.88$ & $1126.46$ & $8374.37$ & $129385.97$ & -- \\
 & Total [ms] & $13.20$ & $67.55$ & $407.13$ & $2755.28$ & $16628.15$ & $195462.18$ & -- \\
 \midrule
 \multirow{4}{*}{CVXPYLayers (SCS)} 
 & Accuracy & $1.69 \times 10^{-1}$ & $2.18 \times 10^{-1}$ & $2.97 \times 10^{-1}$ & Failed & Failed & Failed & Failed \\
 & Forward [ms] & 94.70 & 510.04 & 1644.47 & -- & -- & -- & -- \\
 & Backward [ms] & 36.27 & 167.21 & 404.12 & -- & -- & -- & -- \\
 & Total [ms] & 129.26 & 683.34 & 2048.34 & -- & -- & -- & -- \\
\bottomrule
\end{tabular}
\end{adjustbox}
\label{tab:performance_metrics_chain}
\end{table}

\begin{figure}[t]
\begin{center}
\includegraphics[width=\textwidth]{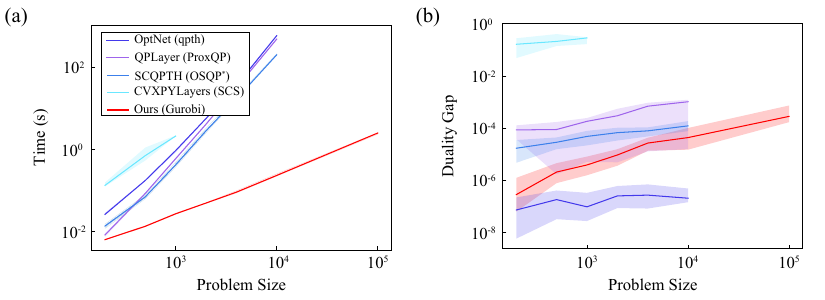}
\end{center}
\caption{Time and accuracy performance for projection onto chains.}
\label{fig:projection_chains}
\end{figure}

For projection onto chains with links of length bounded by 1 in $\infty$-norm, as formulated in $P_{2}$, equation \eqref{eqn:proj_chain}, the input point cloud $x_1,...,x_m \in \Real^d$ is set with $x_i \sim \mathcal{N}(0,100 I_{d})$, where the number of points is $m=100$. By varying the dimension of the vector, $d$, we generated a 300 problem dataset with dimensions $n\in$ \{200,500,1000,2000,4000,10000,100000\}. For $n\leq4000$, each dimension contains 50 problems; for $n >4000$ each dimension contains 25 problems. Gurobi is chosen as dQP's forward solver. Figure~\ref{fig:projection_chains} and Table~\ref{tab:performance_metrics_chain} demonstrate the solvers have performance similar to the projection onto the probability simplex shown in Figure~\ref{fig:structure}, in terms of efficiency. Additionally, dQP successfully solves large-scale problems that other solvers fail to solve.

\subsubsection{Random sparse/dense problems}\label{sec:random_dense_sparse}

We generated two datasets of random QPs: sparse and dense. 

For the random sparse dataset, $P = L^{T}L$, where $L$ is the standard Laplacian matrix of $k$-nearest graph ($k=3$). Entries of $C$ and $A$ are filled by $\mathcal{N}(0,1)$ random numbers with density of $5\times 10^{-4}$; entirely zero rows are avoided. The vectors $d$ and $b$ are generated so that the constraints are feasible $d = C \mathds{1} + \mathds{1}$ and $b = A \mathds{1}$. The dataset contains 625 problems with dimensions spanning $n \in$ \{100, 220, 450, 1000, 2100, 4600, 10000\}, where $m = n$ and $p = n/2$. For $n\leq 4600$, each dimension contains 100 problems; for $n>4600$ each dimension contains 25 problems. The KKT systems in these problems tend to be ill-conditioned. We use Gurobi as dQP's forward solver and employ a least-squares solver for the backward. In our experiments, shown in Table~\ref{tab:performance_metrics_sparse}, OptNet and CVXPYLayers fail on all problems and SCQPTH is substantially slower, failing for $n\geq4600$. dQP demonstrates superior accuracy and efficiency over QPLayer.

For the random dense dataset, $P = Q^{T}Q + 10^{-4}I$, where $Q \in \mathbb{R}^{n \times n}$ with $Q_{ij} \sim \mathcal{U}(0,1)$, $C \in \mathbb{R}^{m \times n}$ with $C_{ij} \sim \mathcal{U}(0,1)$, , and $A \in \mathbb{R}^{p \times n}$ with $A_{ij} \sim \mathcal{U}(0,1)$. The constraint vectors are chosen in the same way as the sparse case. The dataset contains 450 problems with dimensions spanning $n \in$ \{10, 20, 50, 100, 220, 450, 1000, 2100, 4600\}, $m = n$, and $p = n/2$. Each dimension contains 50 problems. We use DAQP and ProxQP as dQP's forward solvers. In the regime of these dense problems, qpth and ProxQP—the forward solvers used by OptNet and QPLayer, respectively—are highly competitive with other available solvers, both free and commercial. As such, we do not expect dQP to yield substantial gains. Indeed, Table~\ref{tab:performance_metrics_dense} shows that our method is comparable to OptNet and QPLayer in both runtime and accuracy. For smaller dimensions ($n \leq 1000$), DAQP offers higher accuracy, while for larger problems, ProxQP proves more efficient.

\begin{table}[ht]
\centering
\caption{Time and accuracy performance statistics on random sparse problems.}
\begin{adjustbox}{max width=\textwidth}
\begin{tabular}{llccccccc}
\toprule
\multirow{2}{*}{Solver} & \multirow{2}{*}{Metric} & \multicolumn{7}{c}{Problem Size} \\
\cmidrule{3-9}
 &  & 100 & 220 & 450 & 1000 & 2100 & 4600 & 10000 \\
\midrule
\multirow{4}{*}{dQP (Gurobi)} 
 & \textbf{Accuracy} & $\mathbf{4.46 \times 10^{-8}}$ & $\mathbf{9.23 \times 10^{-8}}$ & $\mathbf{1.34 \times 10^{-7}}$ & $\mathbf{6.89 \times 10^{-7}}$ & $\mathbf{1.34 \times 10^{-6}}$ & $\mathbf{3.16 \times 10^{-6}}$ & $\mathbf{3.43 \times 10^{-6}}$ \\
 & Forward [ms] & 2.57 & \textbf{3.44} & \textbf{5.53} & \textbf{11.07} & \textbf{60.68} & \textbf{2446.70} & \textbf{143209.89} \\
 & Backward [ms] & 1.79 & 2.86 & \textbf{4.73} & \textbf{9.70} & \textbf{24.03} & \textbf{309.10} & \textbf{9364.61} \\
 & Total [ms] & 4.37 & \textbf{6.33} & \textbf{10.28} & \textbf{20.72} & \textbf{90.01} & \textbf{2760.07} & \textbf{151471.27} \\
\midrule
\multirow{1}{*}{OptNet (qpth)} 
 &  & Failed & Failed & Failed & Failed & Failed & Failed & Failed\\
 \\ 

\midrule
\multirow{4}{*}{QPLayer (ProxQP)} 
 & Accuracy & $6.46 \times 10^{-6}$ & $1.25 \times 10^{-5}$ & $1.69 \times 10^{-5}$ & $3.04 \times 10^{-5}$ & $6.12 \times 10^{-5}$ & $1.77 \times 10^{-3}$ & $7.82 \times 10^{-5}$ \\
 & Forward [ms] & $\mathbf{1.04}$ & $5.47$ & $31.11$ & $235.00$ & $2268.24$ & $23597.22$ & $199009.91$ \\
 & Backward [ms] & $\mathbf{0.30}$ & $\mathbf{1.17}$ & $7.46$ & $51.00$ & $393.68$ & $3538.53$ & $38466.29$ \\
 & Total [ms] & $\mathbf{1.34}$ & $6.63$ & $38.56$ & $285.99$ & $2658.82$ & $27133.19$ & $240084.62$ \\
\midrule
\multirow{4}{*}{SCQPTH (OSQP$^*$)} 
 & Accuracy & $2.21 \times 10^{-7}$ & $3.96 \times 10^{-7}$ & $1.64 \times 10^{-6}$ & $2.85 \times 10^{-5}$ & $1.32 \times 10^{-1}$ & Failed & Failed\\
 & Forward [ms] & $4.43$ & $8.36$ & $27.64$ & $4153.39$ & $79627.33$ & - & -\\
 & Backward [ms] & $1.37$ & $4.92$ & $25.99$ & $177.81$ & $1588.04$ & - & -\\
 & Total [ms] & $5.82$ & $13.38$ & $53.84$ & $4331.38$ & $81211.59$ & - & -\\ 
 \midrule
\multirow{1}{*}{CVXPYLayers (SCS)} 
 &  & Failed & Failed & Failed & Failed & Failed & Failed & Failed\\
\\ 
\bottomrule
\end{tabular}
\end{adjustbox}
\label{tab:performance_metrics_sparse}
\end{table}

\begin{table}[ht]
\centering
\caption{Time and accuracy performance statistics on random dense problems.}
\begin{adjustbox}{max width=\textwidth}
\begin{tabular}{llcccccc}
\toprule
\multirow{2}{*}{Solver} & \multirow{2}{*}{Metric} & \multicolumn{6}{c}{Problem Size} \\
\cmidrule{3-8}
 &  & 20 & 100 & 450 & 1000 & 2100 & 4600 \\
\midrule
\multirow{4}{*}{dQP (daqp)} 
 & Accuracy & $\mathbf{1.59 \times 10^{-11}}$ & $\mathbf{1.20 \times 10^{-8}}$ & $2.35 \times 10^{-6}$ & $4.08 \times 10^{-5}$ & $5.26 \times 10^{-4}$ & Failed \\
 & Forward [ms] & 0.20 & 1.31 & 131.35 & 1115.62 & 10065.77 & - \\
 & Backward [ms] & \textbf{0.14} & 0.48 & 11.22 & 56.90 & 313.90 & - \\
 & Total [ms] & 0.34 & 1.81 & 144.91 & 1174.47 & 10379.68 & - \\
\midrule
\multirow{4}{*}{dQP (proxqp)} 
 & Accuracy & $4.71 \times 10^{-6}$ & $6.42 \times 10^{-5}$ & $9.11 \times 10^{-4}$ & $7.26 \times 10^{-4}$ & $4.13 \times 10^{-4}$ & $4.25 \times 10^{-4}$ \\
 & Forward [ms] & 0.29 & 2.54 & 61.12 & \textbf{379.74} & \textbf{2553.82} & \textbf{26408.12} \\
 & Backward [ms] & 0.17 & 1.85 & 13.53 & 70.25 & 385.04 & 3369.77 \\
 & Total [ms] & 0.46 & 4.32 & 73.68 & \textbf{455.22} & \textbf{2935.93} & \textbf{29771.33} \\
\midrule
\multirow{4}{*}{OptNet (qpth)} 
 & Accuracy & $6.89 \times 10^{-8}$ & $2.51 \times 10^{-8}$ & $\mathbf{3.80 \times 10^{-8}}$ & $\mathbf{3.51 \times 10^{-7}}$ & $\mathbf{2.80 \times 10^{-6}}$ & $\mathbf{3.34 \times 10^{-5}}$ \\
 & Forward [ms] & 2.99 & 7.09 & 78.56 & 463.04 & 3176.59 & 29387.34 \\
 & Backward [ms] & 0.23 & 0.45 & \textbf{5.55} & \textbf{29.30} & \textbf{185.80} & \textbf{1540.07} \\
 & Total [ms] & 3.22 & 7.56 & 84.20 & 491.57 & 3362.25 & 30931.00 \\
\midrule
\multirow{4}{*}{QPLayer (ProxQP)} 
 & Accuracy & $3.08 \times 10^{-6}$ & $6.88 \times 10^{-5}$ & $3.98 \times 10^{-5}$ & $1.31 \times 10^{-4}$ & $1.35 \times 10^{-5}$ & $1.48 \times 10^{-4}$ \\
 & Forward [ms] & \textbf{0.14} & \textbf{0.99} & \textbf{43.11} & 407.77 & 3973.89 & 43740.91 \\
 & Backward [ms] & 0.15 & \textbf{0.34} & 9.67 & 74.24 & 601.25 & 5781.58 \\
 & Total [ms] & \textbf{0.29} & \textbf{1.35} & \textbf{52.99} & 482.17 & 4575.13 & 49558.44 \\
\midrule
\multirow{4}{*}{SCQPTH (OSQP$^*$)} 
 & Accuracy & $3.48 \times 10^{-5}$ & $4.62 \times 10^{-4}$ & $4.32 \times 10^{-5}$ & $6.54 \times 10^{-5}$ & $1.83 \times 10^{-4}$ & $2.26 \times 10^{-3}$ \\
 & Forward [ms] & 10.01 & 26.72 & 120.12 & 664.74 & 6802.36 & 384565.40 \\
 & Backward [ms] & 0.47 & 1.28 & 26.80 & 184.22 & 1733.72 & 15203.05 \\
 & Total [ms] & 10.50 & 27.90 & 147.25 & 850.37 & 8550.04 & 399699.87 \\
 \midrule
 \multirow{4}{*}{CVXPYLayers (SCS)} 
 & Accuracy & $5.40 \times 10^{-4}$ & Failed & Failed & Failed & Failed & Failed \\
 & Forward [ms] & 3.15 & -- & -- & -- & -- & -- \\
 & Backward [ms] & 1.08 & -- & -- & -- & -- & -- \\
 & Total [ms] & 4.22 & -- & -- & -- & -- & -- \\

\bottomrule
\end{tabular}
\end{adjustbox}
\label{tab:performance_metrics_dense}
\end{table}

\subsection{Bi-Level Geometry Optimization}
\label{sec:app_geometry}

The geometry experiments were run on an Intel(R) Core(TM) i7-8850H CPU @ 2.60GHz with 6 cores.

We include a supplementary example in Figure~\ref{fig:mapping_cross} which illustrates the boundary conditions (inequality constraints in \eqref{eqn:geo}) of \citep{kovalsky2015global} using blue arrows (corresponding to applying the Laplacian to the output vertices) and cones at points where the shape is locally non-convex. If the arrows fall outside the cones as shown in (b), then the map is non-invertible; if instead they lie strictly inside the cones then the map is invertible as shown in (c). The process is also flexible: by adding a regularizing term $\lambda \| \frac{M}{\|M\|_F} - \frac{M_c}{\|M_c\|_F}\|_\infty$ that measures the distance to the combinatorial Laplacian $M_c$ shown in (d), we can enhance map quality in the sense that the triangles change their shape less with respect to the input mesh (a).

\begin{figure}[b!]
\centering
\includegraphics{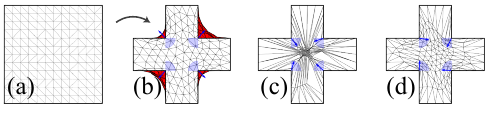} 
\vskip -.1in
\caption{Computing an invertible map into a non-convex plus sign. A naive mapping from (a) the square into (b) the plus without cone constraints has inversions (red). Including cone constraints (blue) and performing the bi-level optimization leads to (c) an invertible map which can be regularized to (d) an enhanced quality map. \label{fig:mapping_cross}}
\end{figure}

The upper-level loss for the bi-level cross experiment is shown in Figure \ref{fig:geometry_loss}(a) where the unregularized loss is driven to the desired tolerance, accompanied by sudden changes in the active set as the dual variables are driven to zero. We terminate the optimization at convergence, once all constraints are inactive to guarantee an injective map. For the regularized optimization (Figure \ref{fig:geometry_loss}(b)), we choose the regularization hyperparameter to be $\lambda = 10$ after sample testing. Despite the regularization, the dual loss can still be driven to 0 to reach an injective map.

To optimize over Laplacians $M$, we directly parameterize the space of Laplacians; we impose that the diagonals are the absolute row sums during optimization and that the off-diagonals are negative. We also constrain $M$ to have the same sparsity pattern as the combinatorial Laplacian. 
Lastly, since the Laplacian $M$ is the quadratic term in \ref{eqn:QP} the resulting QP is not strictly convex because the Laplacian does not have strictly positive eigenvalues. To address this, we perturb $M$ by a small scaling of the identity $10^{-4} I$. 

\begin{figure}[t!]
\begin{center}
\includegraphics[width=\textwidth]{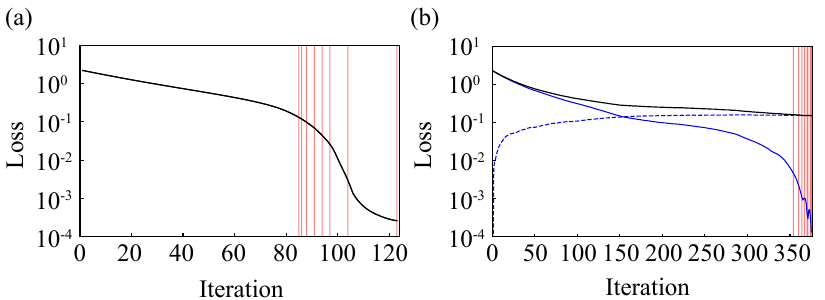}
\end{center}
\caption{The evolution of the loss for the mappings of the square into the cross. Iterations for which the active set changes are denoted with vertical red lines. (a) Without regularization, the loss is driven monotonically to the tolerance. (b) With a competing regularizing loss term (dashed), convergence to the tolerance is slowed but not prevented.}
\label{fig:geometry_loss}
\end{figure}

Throughout the geometry experiments, we use the same solution tolerance $\epsilon_{\textrm{abs}} = 10^{-5}$ and active tolerance $\epsilon_J = 10^{-4}$ with the forward solver PIQP as determined by our tool for choosing the best solver described in Appendix \ref{app:implementation}. For the upper-level optimization, we use the Adam optimizer with learning rate $10^{-2}$ \citep{kingma2017adammethodstochasticoptimization}. We initialize the bi-level optimization with $M_c$. 

We report only forward (QP) time for OptNet, QPLayer and SCQPTH in Figure \ref{fig:geometry}(b) because OptNet and SCQPTH do not output, nor differentiate the duals, and while QPLayer does, it suffers poor scaling from dense operations. The backward timing that we report for dQP excludes any contribution coming from the set-up of the parameterized Laplacian and directly report the time to solve the reduced KKT and extract the gradients with respect to $M$.

The cross (Figure~\ref{fig:mapping_cross}) and ant (Figure~\ref{fig:geometry}(a)) meshes and boundary constraints are obtained from the datasets in \citep{DU2020_lifting_injective}. We create the synthetic mesh refinement example in Figure \ref{fig:geometry}(b) by perturbing the corner of a square mesh.

\end{document}